\newtheorem{lemma}{Lemma}
\newtheorem{theorem}{Theorem}
\newtheorem*{theorem*}{Theorem}
\newcommand{\MYCOMMENT}[1]{\State \textit{\# #1}}
\newcommand{\eqnref}[1]{Equation~\ref{#1}}
\newcommand{\Sample}{\textsc{\small REGSAMP}}
\DeclareMathOperator{\Tr}{Tr}
\DeclareMathOperator{\Lang}{Lang}
\newcommand{\bigO}{\mc{O}}
\newcommand{\Z}{\mc{Z}}
\newcommand{\E}{\mc{E}}
\newcommand{\EL}{\mc{E}^{\scriptstyle \ell}}
\newcommand{\ER}{\mc{E}^{\scriptstyle r}}
\newcommand{\QL}{Q_{\ell}}
\newcommand{\QR}{Q_{r}}
\newcommand{\sR}{|s|_R}
\newcommand{\sRone}{|s_1|_{R_1}}
\newcommand{\sRtwo}{|s_2|_{R_2}}
\newcommand{\Pt}{\tilde{P}}
\newcommand{\R}{\mathbb{R}}
\newcommand{\starchar}{0}
\newcommand{\alphamat}{\alpha\alpha^T}
\newcommand{\omegamat}{\omega\omega^T}
\newcommand{\field}[1]{\R^{#1}}
\newcommand{\mc}[1]{\mathcal{#1}}
\newcommand{\seq}[2]{{#1}_{1}, {#1}_{2}, \ldots, {#1}_{#2}}
\begin{document}

\twocolumn[

\aistatstitle{Tensor Networks for Probabilistic Sequence Modeling}

\aistatsauthor{Jacob Miller \And Guillaume Rabusseau \And John Terilla}
\aistatsaddress{Mila and DIRO \\
Université de Montréal \\
\href{mailto:jmjacobmiller@gmail.com}{\texttt{jmjacobmiller@gmail.com}} \And
CCAI chair - Mila and DIRO \\
Université de Montréal \\
\href{mailto:grabus@iro.umontreal.ca}{\texttt{grabus@iro.umontreal.ca}} \And
CUNY and Tunnel \\
City University of New York \\
\href{mailto:jterilla@gc.cuny.edu}{\texttt{jterilla@gc.cuny.edu}} } ]

\begin{abstract}

Tensor networks are a powerful modeling framework developed for computational many-body physics, which have only recently been applied within machine learning. In this work we utilize a uniform matrix product state (u-MPS) model for probabilistic modeling of sequence data. We first show that u-MPS enable sequence-level parallelism, with length-$n$ sequences able to be evaluated in depth $O(\log n)$. We then introduce a novel generative algorithm giving trained u-MPS the ability to efficiently sample from a wide variety of conditional distributions, each one defined by a regular expression. Special cases of this algorithm correspond to autoregressive and fill-in-the-blank sampling, but more complex regular expressions permit the generation of richly structured data in a manner that has no direct analogue in neural generative models. Experiments on sequence modeling with synthetic and real text data show u-MPS outperforming a variety of baselines and effectively generalizing their predictions in the presence of limited data.

\end{abstract}

\section{Introduction}
\label{sec:intro}

Tensor network models have long represented the state of the art in modeling complex quantum systems~\citep{white1992, fannes1992, orus2019}, but have only recently been utilized as models for machine learning~\citep{novikov2015, cohen2016, stoudenmire2016, novikov2017, han2018, stoudenmire2018, cheng2019}. In contrast to neural networks, tensor networks forgo the use of nonlinear activation functions, relying instead on multiplicative interactions to capture complex correlations within data. This gives tensor networks a convenient mathematical structure suitable for proving general theoretical results, such as the separation in expressivity between almost all deep tensor networks and their shallow counterparts~\citep{cohen2016}. However, these distinctive mathematical properties have yet to be leveraged for the development of new \emph{operational} abilities, which would 
give more practical reasons for the wider adoption of tensor network models in real-world machine learning tasks.
 
In this work we apply a recurrent tensor network, the \emph{uniform matrix product state}~(u-MPS), to the task of probabilistic sequence modeling, and identify several novel abilities of u-MPS regarding their evaluation and generative capabilities. Despite its recurrent nature, we show that sequential inputs to u-MPS can be processed in a highly parallel manner, with sequences of length $n$ being evaluated in parallel time $\bigO(\log n)$. While the difficulty of parallelizing deep recurrent neural networks (RNNs) has previously motivated the development of non-recurrent architectures for sequence processing tasks (e.g. \citep{gehring2017, vaswani2017}), our finding shows that recurrent tensor networks represent another means of achieving greater parallelism.

We further show that u-MPS models are endowed with surprising generative capabilities closely tied to the structure of regular expressions (regex). While standard autoregressive models are constrained to generate sequences in a stream-like fashion conditioned on some prompt, we find that u-MPS can sample from a wide variety of distributions defined by conditioning regular expressions $R$. Our sampling algorithm efficiently produces unbiased samples from the probability distribution learned by the u-MPS, conditioned on the output sequence matching a given regular expression $R$. Standard autoregressive sampling follows from the choice $R = p\Sigma^*$ (for $p$ a prefix string and $\Sigma^*$ the regex matching all sequences), but other special cases include fill-in-the-blank sampling ($R = p\Sigma^*s$, for suffix $s$), as well as the generation of samples constrained to contain some target phrase $t$ ($R=\Sigma^*t\Sigma^*$).

Besides permitting the generation of sequences with rich internal structure, these techniques enable novel forms of regularization, where a u-MPS model can be penalized or incentivized during training to generate strings matching some target pattern. Such regularization can be applied in a variety of challenging tasks, including automatic code generation and mitigating gender bias in language models. Experiments on synthetic and real structured text datasets confirm these novel parallelism, sampling, and regularization benefits, and show u-MPS able to successfully generalize non-local correlations present in small strings to sequences of significantly greater length.

\paragraph{Summary of Contributions}
We give the first implementation of a u-MPS for probabilistic sequence modeling, and uncover several remarkable capabilities of this model\footnote{The code used to produce our results can be found at \url{https://github.com/jemisjoky/umps_code}.}. The absence of nonlinear activation functions in the u-MPS allows us to utilize a parallel evaluation method during training and inference. We develop new techniques linking the structure of u-MPS ``transfer operators'' to that of regular expressions, which in turn enables a flexible recursive sampling algorithm and novel forms of regularization for the u-MPS. We expect these techniques to open up significant new research directions in the design of sequential generative models, with language modeling being a particularly promising domain.

\paragraph{Related Work}
Notable previous applications of tensor networks in machine learning include compressing large neural network weights~\citep{novikov2015}, proving separations in the expressivity of deep vs shallow networks~\citep{cohen2016}, and for supervised~\citep{stoudenmire2016, novikov2017, glasser2018} and unsupervised~\citep{han2018, stoudenmire2018, cheng2019} learning tasks. Of particular relevance is~\citep{stokes2019}, where (non-uniform) MPS were trained as generative models for fixed-length binary sequences using the density matrix renormalization group (DMRG) algorithm. A diverse range of tensor network architectures have also been proposed as theoretical tools for modeling and understanding natural language, such as~\citep{pestun2017, coecke2010, gallego2017, degiuli2019}. The completely positive maps employed in our sampling algorithm are similar to those used in hidden quantum Markov models (HQMM)~\citep{monras2010,srinivasan2018}, and can likewise be interpreted using concepts from quantum information theory.

This work can be seen as a continuation of~\citep{pestun2017a}, where u-MPS were introduced from a theoretical perspective as a language model, but without the parallelization, sampling, or experimental results given here. Our sampling algorithm is a significant generalization of the fixed-length Born machine algorithm introduced in~\citep{han2018}~(which in turn follows that of~\citep{ferris2012}), and by virtue of the recurrent nature of u-MPS, permits the generation of discrete sequences of arbitrary length. The u-MPS model is equivalent to quadratic weighted finite automata~\citep{bailly2011} and (to a lesser extent) the norm-observable operator model (NOOM)~\citep{zhao2010}, and is also an example of a linear (second-order) RNN~\citep{rabusseau2019}. Benefits of linear (first-order) RNNs for parallelization and interpretability were described in~\citep{martin2018, foerster2017input}.

A key difference from previous works is the general techniques developed for evaluating and sampling from regex-structured probability distributions, which to the best of our knowledge are completely new. These techniques apply not only to u-MPS, but also to a broad family of models with similar internal structure, such as weighted finite automata (WFA)~\citep{droste2009}, hidden Markov models (HMM)~\citep{rabiner1986}, and predictive state representations~\citep{littman2002}. We consequently expect the algorithms developed here for regex sampling, regularization, and parallel evaluation to immediately generalize to any of these models which parameterize valid probability distributions.

\section{Background}

We consider sequences over a finite alphabet $\Sigma$, with $\Sigma^n$ denoting the set of all length-$n$ strings, $\Sigma^*$ the set of all strings, and $\varepsilon$ the empty string. We use $\lVert v\rVert$ to denote the 2-norm of a vector, matrix, or higher-order tensor $v$, and $\Tr(M)=\sum_{i=1}^D M_{ii}$ to denote the trace of a square matrix $M\in\field{D\times D}$.

A real-valued\footnote{The restriction to real-valued tensors is natural for machine learning, but differs from the standard in quantum physics of using complex parameters. The results given here carry over to the complex setting, and only require the replacement of some tensors by their complex conjugate.} tensor $\mc{T} \in \R^{d_1\times d_2\times\cdots\times d_n}$ is said to have shape $(\seq{d}{n})$, and can be specified by an indexed collection of elements $\mc{T}_{\seq{i}{n}} \in \R$, where each index $i_k \in [d_k] := \{1, 2, \ldots, d_k\}$. Tensors with $n$ indices are said to be $n$th order, and the set of $n$th order tensors form a vector space of dimension $\Pi_{k=1}^n d_k$.
\begin{figure}[tb]
\vskip 0.1in
\begin{center}
\centerline{\includegraphics[width=\columnwidth]{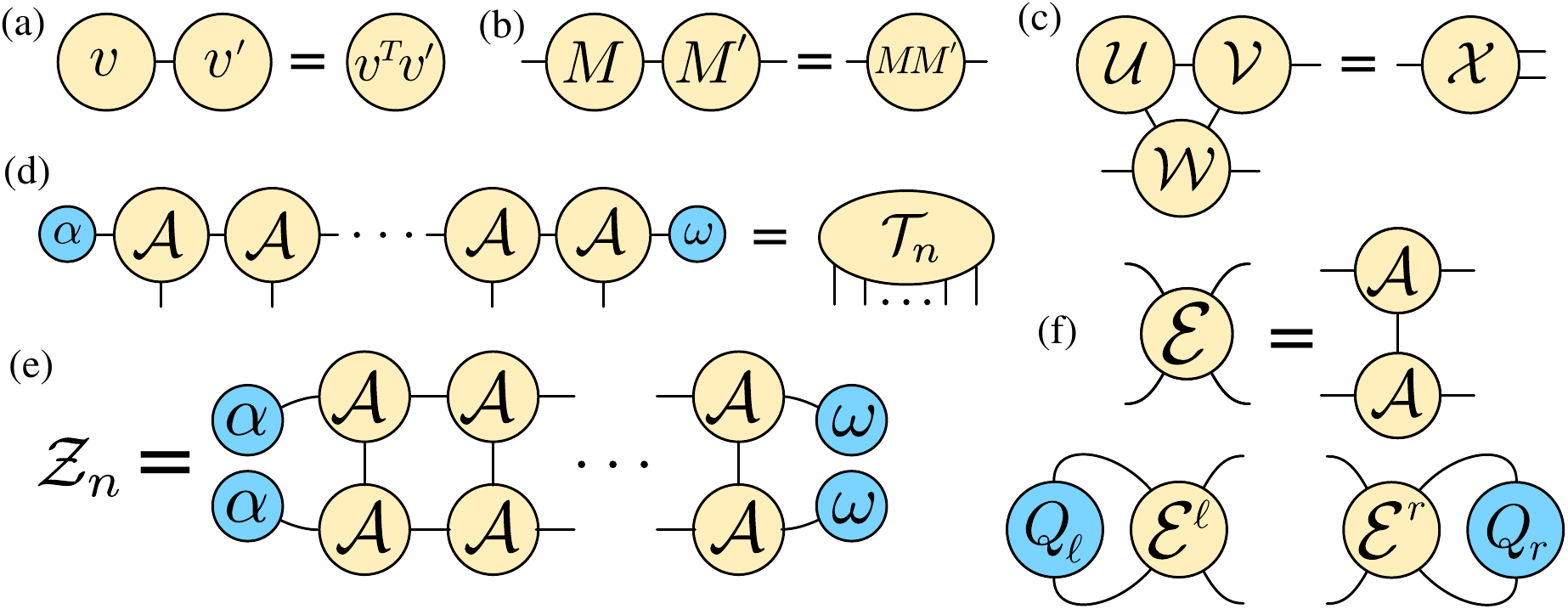}}
\caption{(a-b) Two well-known cases of tensor contractions, inner products of vectors and matrix multiplication. (c) A simple tensor network, where 2nd, 3rd, and 4th order tensors are contracted to form a 3rd order tensor. In numerical libraries, small tensor contractions can be computed with the \texttt{einsum} function, and the output $\mc{X}$ is independent of contraction order. (d) The u-MPS model, which uses a core tensor $\mc{A}$ of shape $(D, d, D)$ and $D$-dimensional vectors $\alpha$ and $\omega$ to define tensors of arbitrary order. (e) The length-$n$ normalization factor $\Z_n$ defined by \eqnref{eq:partition_n}, expressed as a network of tensor contractions. (f) The 4th order tensor $\E$ defined by two copies of the u-MPS core tensor $\mc{A}$. The contraction of $\E$ with a matrix on the left or right gives the left and right \emph{transfer operators} of the u-MPS, linear maps which allow the efficient computation of $\Z_n$ via \eqnref{eq:normalization}.}
\label{fig:contraction}
\end{center}
\end{figure}
Matrices, vectors, and scalars are the simplest examples of tensors, of 2nd, 1st, and 0th order, respectively. Tensor contraction is a generalization of both matrix multiplication and vector inner product, and multiplies two tensors along a pair of indices with equal dimension. If the tensors $\mc{T}$ and $\mc{T'}$ have respective shapes $(d_1,\ldots,d_k,\ldots,d_n)$ and $(d'_1,\ldots,d'_{k'},\ldots,d'_{n'})$, for $d_k = d'_{k'}$, then the contraction of the $k$ and $k'$ indices gives a product tensor $\mc{T''}$, described by elements
\begin{align}
\label{eq:contraction}
& T''_{i_1,\ldots,i_{k-1},i_{k+1},\ldots,i_{n},i'_1,\ldots,i'_{k'-1},i'_{k'+1},\ldots,i'_{n'}} = \nonumber\\
&\hspace{2.5cm} \sum_{i_k=1}^{d_k}\mc{T}_{i_1,\ldots,i_k,\ldots,i_{n}} \mc{T}'_{i'_1,\ldots,i_{k},\ldots,i'_{n'}}.
\end{align}
The contraction operation \eqnref{eq:contraction} is more easily understood with a convenient graphical notation~(see Figure~\ref{fig:contraction}), where individual tensors correspond to nodes in an undirected graph, and edges describe contractions to be performed. Contracting along an index corresponds to merging two connected nodes, to produce a new node whose outgoing edges are the union of those in the tensors being contracted. An important property of tensor contraction is its generalized associativity, so that a network of tensors can be contracted in any order, with the final product tensor being the same in every case.

A natural example of an $n$th order tensor is a probability distribution over length-$n$ sequences $\Sigma^n$, where the probabilities associated with all possible sequences form the $|\Sigma|^n$ separate tensor elements. This exponential growth in the number of elements makes dense representations of higher order tensors infeasible, but convenient tensor decompositions frequently permit the efficient manipulation of tensors with high order, even into the thousands.

The fixed-size matrix product state~\citep{perez2007}~(MPS, also known as tensor train~\citep{oseledets2011})~model parameterizes an $n$th order tensor $\mc{T}$ with shape $(\seq{d}{n})$ as a sequential contraction of $n$ independent tensor ``cores'' $\{\mc{A}^{(j)}\}_{j=1}^n$, which form the parameters of the model. Each $\mc{A}^{(j)}$ has shape $(D_{j-1}, d_j, D_j)$, where $D_0 = D_n = 1$. The dimensions $D_j$ are referred to as bond dimensions (or ranks) of the MPS, and by choosing the $D_j$ to be sufficiently large, it is possible to exactly represent any $n$th order tensor.

\section{Uniform MPS}
\label{sec:umps}

In this work we utilize the \emph{uniform MPS}~(u-MPS) model, a recurrent tensor network obtained by choosing all cores of an MPS to be identical tensors $\mc{A}^{(j)} = \mc{A}$ with shape $(D, d, D)$. To obtain scalar tensor elements, $D$-dimensional vectors $\alpha$ and $\omega$ are used as ``boundary conditions'' to terminate the initial and final bond dimensions of the network. In contrast to fixed-length MPS, the recurrent nature of u-MPS allows the generation of $n$th order tensors $\mc{T}_n\in\R^{d^n}$ for any $n\in\mathbb{N}$, which in turn allows u-MPS to be applied in problems involving sequential data. 

For discrete sequences over an alphabet $\Sigma$ of size $d$, a \mbox{u-MPS}\ (paired with a bijection $\varphi: \Sigma \to [d]$) can be used to map a sequence of arbitrary length-$n$ to the index of an $n$th order tensor $\mc{T}_n$, defining a scalar-valued function $f_{\mc{A}}$ over sequences. Using $\mc{A}(c) = \mc{A}_{:, \varphi(c), :} \in \R^{D\times D}$ to denote the matrix associated with the character $c\in\Sigma$, a u-MPS acts on a sequence $s = s_1s_2\cdots s_n \in \Sigma^n$ as
\begin{equation}
\label{eq:mps_amp}
f_{\mc{A}}(s) = \alpha^{T}\mc{A}(s_1)\mc{A}(s_2)\cdots \mc{A}(s_n)\omega = \alpha^{T}\mc{A}(s)\omega,
\end{equation}
where we use $\mc{A}(s) := \mc{A}(s_1)\mc{A}(s_2)\cdots \mc{A}(s_n)$ to denote the matrix product appearing in \eqnref{eq:mps_amp}. The function $\mc{A}(s)$ can be seen as a matrix-valued representation of arbitrary sequences $s\in\Sigma^*$, and is \textit{compositional} in the sense that $st$ is represented by the product of representations $\mc{A}(s)$ and $\mc{A}(t)$.

\begin{figure*}[t]
\centering
\centerline{\includegraphics[width=0.85\textwidth]{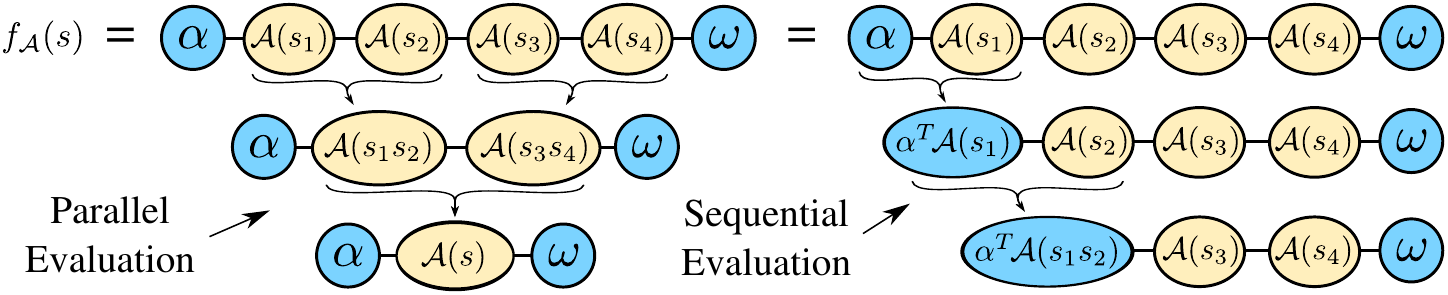}}
\caption{\small Illustration of parallel and sequential evaluation of $f_{\mc{A}}(s)$ when $|s| = 4$, where $f_{\mc{A}}(s) = (\mathcal{T}_4)_{i_1, i_2, i_3, i_4}$, an element of the 4th order tensor defined by a u-MPS. After obtaining the matrix representations $\mc{A}(s_1),\ldots,\mc{A}(s_n)$ from $s$, parallel evaluation involves repeated batch multiplications of nearest-neighbor pairs of matrices, with the boundary vectors $\alpha$ and $\omega$ only incorporated after the matrix product $\mc{A}(s)$ has been obtained. Sequential evaluation instead uses iterated matrix-vector multiplications starting with a boundary vector to contract this product. Parallel and sequential evaluation have respective costs of $\bigO(n D^3)$ and $\bigO(n D^2)$, but the former can be carried out in $\bigO(\log n)$ parallel time. The mathematical equivalence of these evaluation strategies is a basic example of the associativity of tensor contractions, allowing an appropriate method to be chosen based on the size of the model, the problem at hand, and the availability of hardware acceleration.}
\label{fig:parallel}
\end{figure*}

While u-MPS are clearly laid out as a sequential model, the evaluation of $f_{\mc{A}}(s)$ for $|s| = n$ can be parallelized by evaluating \eqnref{eq:mps_amp} using $\lceil \log_2(n) \rceil$ batched matrix-matrix multiplications on all nearest-neighbor pairs of matrices, as shown in Figure~\ref{fig:parallel}. This form of parallelization requires the absence of nonlinear activation functions in the evaluation, and can also be carried out in linear RNNs~\citep{martin2018}.

\subsection{Born Machines}

While \eqnref{eq:mps_amp} is identical to the evaluation rule for WFA, and well-suited for regression tasks, we are interested in using u-MPS as probabilistic models. This requires the interpretation of $f_{\mc{A}}(s)$ as a non-negative probability $P(s)$, and deciding if a general WFA outputs negative values is  undecidable~\citep{denis2008}. This issue can be circumvented by requiring all entries of $\mc{A}$, $\alpha$, and $\omega$ to be non-negative real numbers, but such models can be seen as largely equivalent to hidden Markov models~\citep{denis2008}.

We instead follow the approach introduced in~\citep{pestun2017a} (see also~\citep{han2018}), which is inspired by the typical usage of MPS in quantum mechanics. For the case of u-MPS, this \emph{Born machine} approach converts a scalar value $f_{\mc{A}}(s)$ to an unnormalized probability $\Pt(s) := \left|f_{\mc{A}}(s)\right|^2$. This can be converted into a properly normalized distribution over sequence of fixed length $n$ by choosing $P_n(s) = \Pt(s) / \Z_n$, where the normalization function $\Z_n$ is given by
\begin{align}
\label{eq:partition_n}
\Z_n &= \sum_{s \in \Sigma_n} \Pt(s) = \sum_{i_1\in[d]} \sum_{i_2\in[d]} \cdots \sum_{i_n\in[d]} \left| (T_n)_{i_1,i_2,\ldots,i_d} \right|^2 \nonumber\\ 
&= \left\lVert\mc{T}_n\right\rVert^2,
\end{align}
and with $\mc{T}_n$ the $n$th order tensor defined by the u-MPS. This quadratic evaluation rule is equivalent to the Born rule of quantum mechanics~\citep{born1926}, which gives a formal interpretation of such models as wavefunctions over $n$ quantum spins. However this probabilistic correspondence is richer in the case of u-MPS, since distributions over sequences of different lengths can be easily defined. The distribution $P(s) = \Pt(s) / \Z_*$ in particular gives a probability distribution over strings of arbitrary length, where the normalization factor $\Z_*$ is identical to that given in \eqnref{eq:partition_n}, but with the sum over $\Sigma^n$ replaced by one over $\Sigma^*$. We show in Section~\ref{sec:regex} how normalization functions of this form can be generalized further to incorporate sums over all strings matching an arbitrary regular expression $R$.

Normalization functions like $\Z_n$ occur frequently in many-body physics, and can be efficiently computed via a simple reordering of tensor contractions. By \eqnref{eq:partition_n}, $\Z_n$ equals the 2-norm of $\mc{T}_n$, which is represented diagrammatically as Figure~\ref{fig:contraction}e. The naive method of evaluating $\Z_n$ involves first generating all elements of $\mc{T}_n$ via contraction along the horizontal $D$-dimensional indices of the u-MPS, but the generalized associativity of tensor contraction lets us evaluate this expression more efficiently.

By first contracting two copies of $\mc{A}$ along a vertical $d$-dimensional index (see~\eqnref{fig:contraction}f) we obtain a 4th order tensor $\E$, which can be interpreted as a linear map on a space of matrices in two main ways, by contracting either its left or its right indices with an input. These linear maps, known as \emph{transfer operators}, are examples of completely positive (CP) maps, a generalization of stochastic matrices which find frequent application in the context of quantum information theory (see Appendix~\ref{sec:appendix_a} for more details). These maps admit the Kraus representations $\ER(\QR) = \sum_{c\in\Sigma} \mc{A}(c) \QR \mc{A}(c)^T$ and $\EL(\QL) = \sum_{c\in\Sigma} \mc{A}(c)^T \QL \mc{A}(c)$, which are connected by the adjoint identity $\Tr(\QL \ER(\QR)) = \Tr(\EL(\QL) \QR)$.\footnote{In general, CP maps are linear operators $\mc{F}$ acting on square matrices by the rule $\mc{F}(Q) = \sum_{i=1}^K A_i Q A_i^T$. CP maps are guaranteed to send positive semidefinite (PSD) to other PSD matrices, allowing us to assume in the following that all $\QL$ and $\QR$ are PSD.}

The normalization $\Z_n$ can be equivalently computed in terms of left or right transfer operators, with the latter option yielding
\begin{align}
\label{eq:normalization}
\Z_n &= \alpha^T \ER(\ER(\cdots\ER(\omegamat))\cdots) \alpha \nonumber\\
         &= \Tr\left( \QL^\alpha\ (\ER)^{\circ n}(\QR^\omega) \right),
\end{align}
where $\QL^\alpha = \alphamat$ and $\QR^\omega = \omegamat$ are rank-1 matrices constituting boundary conditions for the normalization term. We use $(\ER)^{\circ n}$ to denote the composition of $\ER$ with itself $n$ times, and define $(\ER)^{\circ 0}$ to be the identity map acting on square matrices. For an MPS of bond dimension $D$ over an alphabet of size $d$, a single transfer operator application requires time $\bigO(dD^3)$, giving a sequential runtime of $\bigO(ndD^3)$ for computing $\Z_n$. By representing transfer operators as $D^2 \times D^2$ matrices, this computation can be parallelized in a similar manner as described in Section~\ref{sec:umps}, but at the price of increasing the total computational cost to $\bigO(nD^6)$.

The distribution $P(s) = \Pt(s) / \Z_*$ over all strings $s\in\Sigma^*$ plays an important role in the following, but to employ this we must ensure the infinite summation $\Z_* = \sum_{s\in\Sigma^*} \Pt(s)$ does in fact converge. This convergence can be guaranteed by rescaling the core tensor to a new $\mc{A}' = \gamma \mc{A}$, for any scalar $\gamma$ satisfying $0 < |\gamma| < \sqrt{\rho(\ER)}$, where $\rho(\ER)$ is the spectral radius of $\ER$. Such rescaling leaves all fixed-length distributions $P_n(s)$ invariant, while introducing a bias towards shorter ($|\gamma| < 1$) or longer ($|\gamma| > 1$) strings in $P(s)$. 

\section{Regular Expressions and u-MPS}
\label{sec:regex}
\begin{table}[t]
\setlength\aboverulesep{0.1pt}
\setlength\belowrulesep{0.1pt}
\renewcommand{\arraystretch}{1.7}
\caption{Dictionary giving the correspondence between regular expressions (regex) and generalized transfer operators associated with a u-MPS (note the reversal of order in $\EL_{R_1 R_2}$). The infinite sum giving $\ER_{S^*}(\QR)$ can be efficiently approximated by partial sums, or else computed as the solution $\QR^*$ to the linear equation $(I - \ER_S)\QR^* = \QR$ (similarly for $\EL_{S^*}(\QL)$).}
\label{tab:regex_cp}
\begin{center}
\begin{small}
\begin{sc}
\begin{tabular}{c|c|c}
    \textbf{Regex} $\mathbf{R}$ & $\bm{\ER_R(\QR)}$ & $\bm{\EL_R(\QL)}$ \\
    \midrule
    $\mathbf{s}$ &         $\mc{A}_s \QR \mc{A}_s^T$                             & $\mc{A}_s^T \QL \mc{A}_s$ \\
    $\mathbf{R_1 R_2}$ &   $\ER_{R_1}(\ER_{R_2}(\QR))$                           & $\EL_{R_2}(\EL_{R_1}(\QL))$ \\
    $\mathbf{R_1 | R_2}$ & $\ER_{R_1}(\QR) + \ER_{R_2}(\QR)$                     & $\EL_{R_1}(\QL) + \EL_{R_2}(\QL)$ \\
    $\mathbf{S^*}$ &       $\sum_{n=0}^\infty (\ER_S)^{\circ n}(\QR)$ & $\sum_{n=0}^\infty (\EL_S)^{\circ n}(\QL)$ 
\end{tabular}
\end{sc}
\end{small}
\end{center}
\vskip -0.1in
\end{table}
While transfer operators as defined above are standard in quantum many-body physics, we now show how this transfer operator calculus can be richly generalized in the setting of sequential data. We work with regular expressions (regex) $R$ over an alphabet $\Sigma$ of size $d$, which can be recursively defined in terms of: (a) String literals $s\in\Sigma^*$, (b) Concatenations of regex $R = R_1 R_2$, (c) Unions of regex $R = R_1 | R_2$, and (d) Kleene closures of regex $R = S^*$. We use $\Sigma$ to denote the union regex of all single characters $c \in \Sigma$, and $\Sigma^n$ to denote the concatenation of $\Sigma$ with itself $n$ times.

Any regex $R$ defines a set $\Lang(R) \subset \Sigma^*$, the language of strings matching the pattern specified by $R$. While $\Lang(R)$ is uniquely determined by $R$, it is typically possible to choose multiple regex which define the same language. We assume in the following that we have chosen an unambiguous regex $R$, so that each string $s \in \Lang(R)$ matches $R$ exactly once. This involves no loss of generality, since any ambiguous regex can be replaced by an unambiguous regex defining the same language~\citep{book1971}. In such cases, we will use $R$ to also represent the subset $\Lang(R)$.

To each regex $R$, we associate a pair of generalized transfer operators $\ER_{R}$ and $\EL_{R}$, formed by summing over all strings in the language $R$, which act on matrices as
\begin{align}
\label{eq:transfer_op}
\ER_R(\QR) &= \sum_{s\in R} \mc{A}(s) \QR \mc{A}(s)^T, \nonumber\\ 
\EL_R(\QL) &= \sum_{s\in R} \mc{A}(s)^T \QL \mc{A}(s).
\end{align}
While the naive sum appearing in~\eqnref{eq:transfer_op} can have infinitely many terms, the action of such CP maps can still be efficiently and exactly computed in terms of the recursive definition of the regex itself. Table~\ref{tab:regex_cp} gives the correspondence between the four primitive regex operations introduced above and the corresponding operations on CP maps. Proof of the consistency between these recursive operations and \eqnref{eq:transfer_op} for unambiguous regex, as well as a generalized correspondence holding for arbitrary regex, is given in Appendix~\ref{sec:appendix_a}.

While most regex operations in Table~\ref{tab:regex_cp} are straightforward, the Kleene closure $\ER_S$ involves an infinite summation which is guaranteed to converge whenever the spectral norm of $\ER_S$ is bounded as $\rho(\ER_S) < 1$. We denote the value of this convergent sum by $\QR^*$, which can be approximated using a finite number of summands or alternately computed as the solution to the linear equation $(I - \ER_S)\QR^* = \QR$ (see~\citep{balle2019}).

A fruitful way of interpreting the transfer operators $\ER_R$ and $\EL_R$ is as normalization functions for u-MPS sampling distributions. We define the quantity $\Z_R(\QL, \QR) = \Tr(\QL \ER_R(\QR))$ to be the (unnormalized) probability associated to a regex $R$ in the presence of boundary matrices $\QL, \QR$, and the quantity $\Z_R = \Z_R(\alphamat, \omegamat)$ as utilizing the boundary matrices of the u-MPS. It follows from \eqnref{eq:transfer_op} that $\Z_R = \sum_{s \in R}\Pt(s)$ does indeed give the unnormalized probability associated to all strings $s$ matching $R$. We recover as special cases of this the quantities $\Z_n = \Z_{\Sigma^n}$ and $\Z_* = \Z_{\Sigma^*}$ defined above.

\section{Regex Sampling and Regularization}
\label{sec:sampling}
\begin{algorithm}[tb]
\caption{Regex sampling algorithm for u-MPS}
\label{alg:sampling}
\begin{algorithmic}
\Function{\Sample}{$R, Q_\ell, Q_r$}
  \If{$R = s$} \hspace{4.0em} 
    \MYCOMMENT{Sample a string literal $s \in \Sigma^*$}
    \State\Return{$s$}
  \ElsIf{$R = R_1 R_2$} \hspace{0.0em} 
    \MYCOMMENT{Sample a sequence of expressions}
    \State $s_1 = \Sample(R_1, \QL, \ER_{R_2}(\QR))$
    \State $s_2 = \Sample(R_2, \EL_{s_1}(\QL), \QR)$
    \State\Return{$s_1 s_2$}
  \ElsIf{$R = R_1 | R_2$} \hspace{-0.2em} 
    \MYCOMMENT{Sample a union of expressions}
    \State Sample random $i \in \{1, 2\}$, with probs 
    \State \hspace{0.7cm} $p(i) = \Z_{R_i}(\QL, \QR) \ / \ \Z_{R_1 | R_2}(\QL, \QR)$
    \State $s_i = \Sample(e_i, \QL, \QR)$
    \State\Return{$s_i$}
  \ElsIf{$R = S^*$} \hspace{1.0em} 
    \MYCOMMENT{Sample regex $S$ zero or more times}
    \State Sample random $i \in \{\textrm{HALT}, \textrm{GO}\}$, with probs 
    \State \hspace{0.7cm} $p(\textrm{HALT}) = \Tr(\QL \QR) / \Z_{S^*}(\QL, \QR)$, 
    \State \hspace{0.7cm} $p(\textrm{GO}) = 1 - p(\textrm{HALT})$
    \If{$i = \textrm{HALT}$} \hspace{0.0em} \MYCOMMENT{Return empty string}
      \State\Return{$\varepsilon$}
    \Else \hspace{6.0em} \MYCOMMENT{Sample one or more chars}
      \State\Return{$\Sample(SS^*, \QL, \QR)$}
    \EndIf
  \EndIf
\EndFunction
\end{algorithmic}
\end{algorithm}
The correspondence developed above between syntactic operations on regex and linear-algebraic operations on transfer operators endows u-MPS models with surprising capabilities unavailable to probabilistic models based on neural networks. We discuss the application of these techniques for sampling from conditional distributions of strings matching a target regex, as well as for utilizing a novel form of task-specific regularization during training.

\subsection{Sampling}

We introduce a regex-parameterized sampling function $\Sample$ in Algorithm~\ref{alg:sampling}. $\Sample$ gives a recursive means of converting any regex $R$ into an efficient sampling procedure, whose random outputs are (for unambiguous $R$) unbiased samples from the conditional u-MPS distribution associated with the subset $R \subset \Sigma^*$. This is formalized in
\begin{theorem}
\label{thm:sample}
Consider a u-MPS model with core tensor $\mc{A}$ and boundary vectors $\alpha$ and $\omega$, along with an unambiguous regex $R$ whose right transfer operator $\ER_R$ converges. Let $P$ indicate the probability distribution over arbitrary strings defined by the u-MPS, so that $\Sigma_{s\in\Sigma^*} P(s) = 1$. Then calling $\Sample(R, \alphamat, \omegamat)$ samples a string $s \in \Sigma^*$ from the conditional u-MPS distribution $P(s | s \in R) = P(s) / P(R)$, with $s \in R$ and where $P(R) := \sum_{s' \in R} P(s')$.
\end{theorem}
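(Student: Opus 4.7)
My plan is to strengthen the statement and prove by structural induction on $R$ the following invariant: for any positive semidefinite matrices $\QL, \QR$ and any unambiguous regex $R$ for which $\ER_R$ is well-defined, the call $\Sample(R, \QL, \QR)$ returns a string $s \in R$ with probability $\Tr(\QL \mc{A}(s) \QR \mc{A}(s)^T) / \mc{Z}_R(\QL, \QR)$. Theorem~\ref{thm:sample} then follows by specializing to $\QL = \alphamat$ and $\QR = \omegamat$: the numerator becomes $(\alpha^T \mc{A}(s) \omega)^2 = \Pt(s)$, the denominator equals $\sum_{s' \in R}\Pt(s') = \mc{Z}_* P_*(R)$, and their ratio is exactly $P_*(s)/P_*(R)$.

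The base case $R = c$ is immediate. For the union $R = R_1 | R_2$, the selection probability $p(i) = \mc{Z}_{R_i}/\mc{Z}_{R_1|R_2}$ composes with the inductive hypothesis on the recursive call to produce the correct unconditional weight, with unambiguity ensuring that $s$ belongs to exactly one branch and so no double-counting occurs. For the Kleene closure $R = S^*$, I would split $\Lang(S^*) = \{\varepsilon\} \cup \Lang(SS^*)$: the weight of $\varepsilon$ is $\Tr(\QL \mc{A}(\varepsilon)\QR \mc{A}(\varepsilon)^T) = \Tr(\QL \QR)$ since $\mc{A}(\varepsilon) = I$, and from Table~\ref{tab:regex_cp} one has $\mc{Z}_{S^*}(\QL, \QR) = \Tr(\QL \QR) + \mc{Z}_{SS^*}(\QL, \QR)$, which shows that the HALT and GO branch probabilities exactly match the contributions of $\varepsilon$ and of nonempty strings.

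The key technical step is the concatenation case $R = R_1 R_2$, where the design of the algorithm pays off. Multiplying the two probabilities given by the inductive hypothesis on the recursive calls yields a numerator $\Tr(\QL \mc{A}(t_1)\ER_{R_2}(\QR)\mc{A}(t_1)^T) \cdot \Tr(\EL_{t_1}(\QL) \mc{A}(t_2)\QR\mc{A}(t_2)^T)$ and denominator $\mc{Z}_{R_1}(\QL, \ER_{R_2}(\QR)) \cdot \mc{Z}_{R_2}(\EL_{t_1}(\QL), \QR)$. Applying the adjoint identity $\Tr(\QL \mc{A}(t_1) M \mc{A}(t_1)^T) = \Tr(\EL_{t_1}(\QL) M)$ with $M = \ER_{R_2}(\QR)$ shows that the first numerator factor equals $\mc{Z}_{R_2}(\EL_{t_1}(\QL), \QR)$, cancelling against the second denominator factor. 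The remaining numerator collapses to $\Tr(\QL \mc{A}(t_1 t_2)\QR \mc{A}(t_1 t_2)^T)$ by compositionality $\mc{A}(t_1 t_2) = \mc{A}(t_1)\mc{A}(t_2)$, and the remaining denominator simplifies to $\mc{Z}_{R_1 R_2}(\QL, \QR)$ by the composition rule for $\ER$ in Table~\ref{tab:regex_cp}. Unambiguity ensures that the split $s = t_1 t_2$ with $t_i \in R_i$ is unique, so summing the probabilities over factorizations poses no issue.

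The main obstacle I anticipate is ensuring that $\Sample$ almost-surely terminates, which is nontrivial only in the Kleene case: each recursive pass through $S^*$ continues only when GO is chosen, with probability $\mc{Z}_{SS^*}(\QL, \QR)/\mc{Z}_{S^*}(\QL, \QR) < 1$. The convergence hypothesis on $\ER_R$ (equivalently, convergence of the geometric series $\sum_n (\ER_S)^{\circ n}$ for every Kleene subexpression) gives uniform control on these probabilities and yields termination with probability one, at which point the induction closes.
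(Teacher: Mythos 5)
Your proposal follows essentially the same route as the paper: the paper's Appendix~\ref{sec:appendix_b} proves a strengthened Lemma characterizing the output distribution of $\Sample(R, \QL, \QR)$ for arbitrary PSD boundary matrices by structural induction on $R$, then specializes to $\QL = \alphamat$, $\QR = \omegamat$. Your base case, union case (using $p(i) = \mc{Z}_{R_i}/\mc{Z}_{R_1|R_2}$), and concatenation case (adjoint identity $\Tr(\QL\,\mc{A}(t_1) M \mc{A}(t_1)^T) = \Tr(\EL_{t_1}(\QL) M)$ plus compositionality $\mc{A}(t_1 t_2) = \mc{A}(t_1)\mc{A}(t_2)$) match the paper's argument step for step; the only difference is that the paper tracks the multiplicity $\sR$ so as to also cover ambiguous regex, whereas you assume unambiguity throughout, which suffices for the theorem as stated.

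One step in your plan needs more care than you give it: in the Kleene case, the GO branch calls $\Sample(SS^*, \QL, \QR)$, and $SS^*$ is not a structurally smaller regex than $S^*$ --- unwinding the concatenation rule, this call requires the inductive hypothesis for $\Sample(S^*, \EL_{s_1}(\QL), \QR)$, i.e.\ for the very expression you are trying to analyze (with different boundary matrices). So the plain structural induction does not close there, and almost-sure termination alone does not repair it. The paper resolves this with an auxiliary induction on the number $n$ of $S$-repetitions, showing that the unnormalized contribution from exactly $n$ occurrences equals $p(n)\,P_{S^n}$ with $p(n) = \mc{Z}_{S^n}/\mc{Z}_{S^*}$, and then summing over $n$ using $\sum_{n\ge 0} |s|_{S^n} = |s|_{S^*}$. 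You would need to insert this inner induction (or an equivalent fixed-point argument over the boundary matrices) to make the Kleene case rigorous; with that addition your proof is complete and coincides with the paper's.
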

We prove Theorem~\ref{thm:sample} in Appendix~\ref{sec:appendix_b}, which also discusses the use of ambiguous regex $R$. For this latter case, Algorithm~\ref{alg:sampling} works identically, but weights strings $s$ by the number of times $s$ matches $R$.

Although Algorithm~\ref{alg:sampling} is written in a recursive manner, it is useful to consider the simple example $R = \Sigma^n$, a concatenation of the single-character regex $\Sigma$ with itself $n$ times, to understand the overall control flow. In this case, Algorithm~\ref{alg:sampling} first attempts to sample the initial character in the string via a recursive call to $\Sample(\Sigma, \alphamat, \ER_{\Sigma^{n-1}}(\omegamat))$. This requires $n-1$ applications of the transfer operator $\ER$ to the initial right boundary matrix, and yields one new character before continuing to the right and repeating this process again.

As is common with recursive algorithms, caching intermediate information permits the naive cost of $(n-1) + (n-2) + \cdots + 1 = \bigO(n^2)$ transfer operator applications to be reduced to $\bigO(n)$. This cached version is equivalent to a simple iterative algorithm, where a sequence of right boundary matrices is first generated and saved during a right-to-left sweep, before a left-to-right sweep is used to sample text and propagate conditional information using the left boundary matrices. Using this idea, we show in Appendix~\ref{sec:appendix_c} that for typical regex $R$, Algorithm~\ref{alg:sampling} can be run with average-case runtime $\bigO(L d D^3)$ and worst-case memory usage $\bigO(L D^2)$, for $L$ the number of characters in $R$, $d$ the size of $\Sigma$, and $D$ the bond dimension of the u-MPS.

\subsection{Regularization}

The normalization function $\Z_{R}$ defined by a regex $R$ gives the unnormalized probability assigned to all strings matching $R$. We first show that for the case of unambiguous $R$, this probability can be properly normalized.
\begin{theorem}
\label{thm:regularization}
Consider a u-MPS model and an unambiguous regex $R$ satisfying identical conditions as in Theorem \ref{thm:sample}. Then the probability $P(R) = \sum_{s \in R} P(s)$ assigned by the u-MPS to the set of strings matching $R$ can be exactly calculated as $P(R) = \Z_R / \Z_*$.
\end{theorem}
The practical importance of Theorem~\ref{thm:regularization} lies in the ability to compute any $\Z_R = \Tr(\alphamat \ER_R(\omegamat))$ inside of an automatic differentiation library, possibly with the aid of techniques described in~\citep{liao2019}. By making $P(R)$ directly computable as a function of the u-MPS parameters $\mc{A}$, $\alpha$, and $\omega$, this probability can be incorporated as a regularizer (i.e. a differentiable loss term) during training.

Although it is not immediately clear how to think about such ``regex regularizers'', we provide three examples which can be used during gradient-based training of a u-MPS model. First, $P(R)$ can be directly added to the loss, encouraging gradient updates of the model to minimize the probability of strings belonging to $R$. In the context of language models, this could be used to avoid learning offensive phrases seen in training data, for example by choosing $R = \Sigma^* S \Sigma^*$ with $S$ a union of strings extracted from a dataset of abusive language.

A related loss is $\mc{L} = |P(R_1) - P(R_2)|$, which penalizes differences in the probabilities assigned to regex $R_1$ and $R_2$. Such regularization would be most effective when the regex $R_1$ and $R_2$ are similarly constructed, with the loss enforcing an indifference between these two options. This could be applied for the mitigation of gender bias in language models, for example by choosing each $R_i$ to be $R_i = \Sigma^* s_i \Sigma^*$, for $s_1$ and $s_2$ a pair of identical but oppositely gendered phrases (e.g. ``his career'' vs. ``her career'' or ``he cooks'' vs. ``she cooks''). 

Finally, using a loss $\mc{L} = -\log(P(R))$ encourages maximizing the probability of strings belonging to $R$. This type of regularization is natural when all strings produced by the model should belong to some regular language, for example when choosing a syntactically valid variable name in a code completion task. The extension of Theorem~\ref{thm:sample} and \ref{thm:regularization} to \emph{context-free} languages would greatly broaden the range of applications for these methods in language modeling, given the fundamental role played by context-free grammars in structuring natural language.

\section{Experiments}
\label{sec:experiments}
\begin{table}[t]
\caption{Experiments on Tomita grammars 3-7 (see Appendix~\ref{sec:appendix_d} for the definitions of these grammars), where the training data is randomly chosen from strings of lengths between 1 and 15 belonging to the grammar. The trained models are used to sample strings of lengths 16 and 30, with the percentage of grammatically correct samples reported. The u-MPS consistently gives better generalization across different lengths (quite substantially for Tomita 5), except for Tomita 6 which neither model is able to learn. Most of the Tomita grammars are too small to train with more than 1,000 strings, but Tomita 5 and 6 permit experiments with larger datasets.}
\label{tab:tomita}
\setlength\aboverulesep{0.1pt}
\setlength\belowrulesep{0.1pt}
\renewcommand{\arraystretch}{1.1}
\begin{center}
\begin{small}
\begin{sc}
\begin{tabular}{lc|cccc}
    Tomita & Samp. & \multirow{2}{*}{u-MPS} & \multirow{2}{*}{HMM} & \multirow{2}{*}{LSTM} & \multirow{2}{*}{TR} \\
    ($N_{\mathrm{train}}$) & Len. &  &  &  \\
    \midrule
    3 (1K)  & 16 & \textbf{100.0} & 91.6          & 90.2          & 28.8 \\
    3 (1K)  & 30 & \textbf{100.0} & 82.0          & 85.6          & 9.4  \\
    4 (1K)  & 16 & \textbf{99.9}  & 99.4          & 85.4          & 50.7 \\
    4 (1K)  & 30 & 99.5           & \textbf{99.6} & 64.7          & 32.5 \\
    5 (10K) & 16 & \textbf{100.0} & 52.0          & 49.9          & 51.1 \\
    5 (10K) & 30 & \textbf{99.9}  & 49.8          & 52.8          & 50.5 \\
    6 (10K) & 16 & \textbf{35.9}  & 34.4          & 33.1          & 32.9 \\
    6 (10K) & 30 & 33.1           & 33.1          & \textbf{34.4} & 32.7 \\
    7 (1K)  & 16 & \textbf{99.3}  & 98.1          & 89.2          & 51.3 \\
    7 (1K)  & 30 & \textbf{89.4}  & 79.3          & 29.1          & 10.0 \\
    
\end{tabular}
\end{sc}
\end{small}
\end{center}
\vskip -0.1in
\end{table}
\begin{table}[t]
\caption{Experiments on the context-free Motzkin grammar, where the training set is fixed to contain only strings of length 15. We explore both fixed-length sampling (Samp) and character completion (Comp) tasks, where the model either samples a string from scratch, or predicts a missing character in a reference string given access to the character's prefix and suffix. In each case, the same trained u-MPS, HMM, and Transformer are used to generate both sampling and character completion data. The bidirectional LSTM performs best on shorter strings in the character completion task, but quickly degrades in accuracy as the length of the reference strings are increased.}
\label{tab:motzkin}
\setlength\aboverulesep{0.1pt}
\setlength\belowrulesep{0.1pt}
\renewcommand{\arraystretch}{1.1}
\begin{center}
\begin{small}
\begin{sc}
\begin{tabular}{lc|cccc}
    Task & Str. & \multirow{2}{*}{u-MPS} & \multirow{2}{*}{HMM} & \multirow{2}{*}{LSTM} & \multirow{2}{*}{TR} \\
    ($N_{\mathrm{train}}$) & Len. &      &  &  &  \\
    \midrule
    Samp (1K)  & 1  & \textbf{89.4} & 37.4 & 41.7           & 39.1 \\
    Samp (1K)  & 16 & \textbf{74.4} & 30.3 & 41.2           & 2.3  \\
    Samp (1K)  & 50 & \textbf{32.5} & 12.6 & 0.0            & 0.6  \\
    Samp (10K) & 1  & \textbf{99.3} & 36.0 & 35.7           & 36.2 \\
    Samp (10K) & 16 & \textbf{99.8} & 34.3 & 60.4           & 0.5  \\
    Samp (10K) & 50 & \textbf{91.6} & 12.4 & 5.4            & 0.2  \\
    Comp (1K)  & 1  & 89.4          & 39.2 & \textbf{99.9}  & 32.1 \\
    Comp (1K)  & 16 & 69.6          & 29.1 & \textbf{99.5}  & 30.7 \\
    Comp (1K)  & 50 & 58.8          & 13.1 & \textbf{61.3}  & 30.2 \\
    Comp (10K) & 1  & 99.3          & 36.3 & \textbf{100.0} & 33.5 \\
    Comp (10K) & 16 & 99.8          & 31.7 & \textbf{100.0} & 34.5 \\
    Comp (10K) & 50 & \textbf{92.4} & 14.8 & 69.1           & 33.7 \\
    
\end{tabular}
\end{sc}
\end{small}
\end{center}
\vskip -0.1in
\end{table}
\begin{table}[t]
\caption{Runtimes for computing the loss and gradient with respect to model parameters using a u-MPS with bond dimension 50, for a batch of 100 strings of length 500 evaluated on a CPU or GPU. While computation on a CPU favors sequential evaluation, owing to its lower overall cost, the reduced parallel depth inherent to parallel evaluation leads to a reduced runtime in the presence of GPU hardware acceleration.}
\label{tab:runtime}
\renewcommand{\arraystretch}{1.1}
\begin{center}
\begin{small}
\begin{sc}
\begin{tabular}{c|cc}
          & Sequential Eval. & Parallel Eval.   \\
          & Runtime (ms)     & Runtime (ms) \\
\hline
CPU comp. & \textbf{73.0}    & 232.7 \\
GPU comp. & 49.9             & \textbf{45.2}  \\
\end{tabular}
\end{sc}
\end{small}
\end{center}
\vskip -0.1in
\end{table}

We use experiments on synthetic and real structured text datasets to assess the performance of u-MPS in probabilistic sequence modeling and grammatical inference, as well as to verify the benefits of u-MPS for parallelization, regex sampling, and regularization. 

\subsection{Synthetic Experiments}

We first carry out experiments on several synthetic text datasets consisting of five Tomita grammars of binary strings and a context-free ``Motzkin'' grammar over the trinary alphabet $\Sigma_{M} = \{\mathtt{\ (\ ,\ \starchar\ ,\ )\ }\}$~\citep{tomita1982, alexander2018}. The latter consists of all strings whose parentheses are properly balanced, with no constraints placed on the $\mathtt{\starchar}$ characters. 

In each case we train the u-MPS on strings of a restricted length from the grammar and then sample new strings of unseen lengths from the trained u-MPS, with the model assessed on the percentage of sampled strings which match the grammar. The sampling comes in two forms, either fixed length-$n$ sampling (corresponding to $R = \Sigma^n$), or character completion sampling, where a single character in a reference string is masked and the prefix and suffix $p$ and $s$ are used to guess it (corresponding to $R = p \Sigma s$). While more general sampling experiments can easily be imagined, we have chosen these tasks because they allow for direct comparisons with a variety of baselines, including (unidirectional and bidirectional) LSTMs, HMMs, and Transformers.

While unbiased fixed-length sampling is easy for u-MPS via Algorithm~\ref{alg:sampling}, we found that the unidirectional LSTM baseline required an additional positional encoding in its inputs to avoid rapid degeneration in the output text when sampling past the longest length seen in training. At sampling time, we vary the length scale associated with this encoding based on the desired sampling length, so that the final step of sampling is always associated with the same positional encoding vector.

We train the u-MPS, LSTM, HMM, and small Transformer models using gradient descent on a negative log likelihood (NLL) loss with the Adam~\citep{kingma2015} optimizer. For each experiment we use models of $D=20$ and $D=50$ hidden units, with LSTMs chosen to have one layer and Transformers with two layers and 4 heads. Five independent trials are used for each value of $D$, with the final validation loss used to select the best model for generating samples. We use a piecewise constant learning rate between $10^{-2}$ and $10^{-4}$, and early stopping to choose the end of training.

In the Tomita experiments~(Table~\ref{tab:tomita}) u-MPS give impressive performance, in many cases achieving perfect accuracy in sampling strings of unseen sizes within the language. This is true not only in the simpler grammars Tomita 3 and 4, but also in the more difficult Tomita 5, where valid strings satisfy the nonlocal constraint of containing an even number of \texttt{0}'s and of \texttt{1}'s. The HMM and LSTM also attain reasonably high sample accuracy, although in a manner that degrades faster with sequence length than the u-MPS, while the Transformer performs worst.

Similar results are seen with the context-free Motzkin language~(Table~\ref{tab:motzkin}), where a fixed-length sampling task similar to the Tomita experiment is paired with a character completion task. A separate bidirectional LSTM must be used for this latter task, since unidirectional LSTMs cannot make use of future context information. By contrast, a trained u-MPS model can be employed in both of these settings without any task-specific adaptation, as well as in more general sentence completion tasks involving connected or disjoint regions of missing text~(tasks which cannot be easily handled by standard RNN models). The u-MPS does substantially better in generalizing and reproducing the structure of Motzkin strings than the unidirectional LSTM, HMM, and Transformer, being able to sample strings of over 3 times the length seen in training with over 90\% accuracy. The u-MPS is outperformed only by the bidirectional LSTM in character completion experiments on smaller training sets.

Given the ability of HMMs to exactly reproduce the distributions associated with Tomita languages and (bounded length) Motzkin languages, it is surprising that the u-MPS still manages to more easily learn such distributions in practice. Somewhat surprising also is the poor performance of the Transformer models, which is likely a result of the small sizes of the string datasets used.

We additionally benchmark the relative runtime of sequential and parallel evaluation for u-MPS running on a CPU or GPU (Table~\ref{tab:runtime}). For a u-MPS of $D$ hidden states with strings of length $n$, RNN-style sequential evaluation has parallel depth $\mathcal{O}(n)$ and cost $\mathcal{O}(nD^2)$, while parallel evaluation trades this for a parallel depth of $\mathcal{O}(\log n)$ and cost $\mathcal{O}(nD^3)$. This would suggest parallel evaluation having benefits for the total runtime when hardware acceleration is present, which the runtimes in Table~\ref{tab:runtime} confirm.

\subsection{Email Experiments}
\begin{table}[t]
\caption{Training on emails and assessing the per-character perplexity (PPL) and accuracy of unconditional sampling using a u-MPS with bond dimension 50. Making use of a regularizer associated with a regex $R_e$, approximating the formatting of valid email addresses, leads to small gains in both the syntactic correctness of unconditionally sampled strings and overall perplexity. Note that the use of \emph{conditional} sampling relative to $R_e$ would guarantee the generation of syntactically valid strings (Theorem~\ref{thm:sample}), a fact we verify experimentally.}
\label{tab:regularization}
\renewcommand{\arraystretch}{1.1}
\begin{center}
\begin{small}
\begin{sc}
\begin{tabular}{c|cc}
           & u-MPS          & u-MPS   \\
           & w/ regex reg.  & w/o regex reg. \\
\hline
Correct \% & \textbf{37.2}  & 35.4    \\
Test PPL   & \textbf{7.3}   & 7.8     \\
\end{tabular}
\end{sc}
\end{small}
\end{center}
\vskip -0.1in
\end{table}

To verify the correctness and relative benefits of regex sampling and regularization, we train on a dataset of approximately 4,000 email addresses taken from the CLAIR fraudulent emails dataset~\citep{radev2008}. Although the correctness of an email address generally depends on non-syntactic considerations (such as domain name resolution), we can approximate the format of valid email addresses using the regex $ R_{e} = \verb![\w-.]+@([\w-]+.)+[\w-][\w-]+! $ (a generalization of the pattern \texttt{name@site.tld}).

We first train a u-MPS using gradient descent on this email address dataset and look at the perplexity of a held-out validation set and correctness of (unconditionally) sampled strings, both with and without the use of regularization associated with $R_e$. We find the use of regex regularization to yield small improvements to perplexity and correctness of sampled text, as shown in Table~\ref{tab:regularization}. 

Although the correctness of conditional regex sampling is already guaranteed by Theorem~\ref{thm:sample}, we confirm this experimentally by using the regex $R_e$ to periodically produce conditional samples during training. We find that conditional sampling relative to $R_e$ does indeed always yield random strings matching the desired regex, but with the quality of generated text gradually improving during training. While conditional sampling of the randomly initialized u-MPS produces syntactically-valid but otherwise random strings, such as {\verb k90@4riuh2600xz1.wz }, training the model on the email address dataset leads to the production of more realistic-looking email addresses, such as {\verb sail203@yahoo.com }.

\section{Conclusion}

We develop a u-MPS model for probabilistic modeling of sequence data, which we show has distinctive capabilities regarding parallelism, sampling, and regularization, in a manner which mirrors the structure of regular languages. Although our results are derived in the specific context of u-MPS, the underlying techniques used to demonstrate these capabilities are applicable to other models with a similar mathematical layout, such as WFA, HMM, and PSR. As a result, our Theorems~\ref{thm:sample} and \ref{thm:regularization} should generalize to other such models. We expect the algorithms developed here to be associated with different runtimes when applied to other models, leading to different performance tradeoffs of parallelization, sampling, and regularization methods than what is reported here. For example, the parallel evaluation method used here requires $\bigO(n D^3)$ resources for a u-MPS, compared with $\bigO(n D^6)$ for a HQMM. 

Beyond these immediate generalizations, a more interesting extension of our results is the generalization of Theorems~\ref{thm:sample} and \ref{thm:regularization} to the setting of context-free languages. While all regular languages are context-free, the latter has significantly greater expressive power and relevance for applications in natural language processing and automatic code generation. Surprisingly, we have found that such languages can indeed be sampled from and employed for regularizers within u-MPS models, although with a higher cost of $\bigO(D^6)$.

Given the ability we demonstrate to use grammars to constrain the probability distribution of u-MPS models, a natural question is whether the inverse process is possible: Namely, given a trained u-MPS model, do there exist automatic means of identifying grammatical rules which account for some portion of the correlations present within the learned probability distribution? Such techniques would represent a qualitatively new type of grammatical inference, where grammar rules and language models interact in a two-way manner. 

Finally, a natural next step is scaling up u-MPS for real-world sequence modeling tasks, notably language modeling. Some current obstacles to this process are (a) the $\bigO(D^3)$ cost of certain u-MPS operations, and (b) the absence of well-established best-practices for training large tensor networks with gradient descent. We expect these obstacles to be overcome by dedicated engineering effort and the rapidly growing number of software libraries for manipulating tensor networks, along with the adoption of powerful computational methods developed by the many-body physics community into machine learning. Considering the unexpected benefits demonstrated here, we expect recurrent tensor network architectures to have a bright future. 

\section*{Acknowledgements}
This research is supported by the Canadian Institute for Advanced Research (CIFAR AI chair program).

\appendix
\onecolumn

\section{Completely Positive Maps and Generalized Transfer Operators}
\label{sec:appendix_a}

In this section, we give definitions and known results concerning completely positive (CP) maps and regular expressions (regex), as well as further details regarding the assignment of generalized transfer operators to regex\footnote{We present all definitions and results in this section in terms of real-valued matrices, but the corresponding statements for complex-valued matrices and CP maps is obtained by replacing matrix transposes $Q^T$ by Hermitian adjoints $Q^\dagger$, representing the transposed and complex-conjugated counterpart of $Q$.}. We conclude with a proof that the recursive definition of generalized transfer operators given in Table~\ref{tab:regex_cp}
(repeated here for convenience) has an equivalent representation in terms of a weighted sum over all strings in the regex, which for unambiguous regex gives precisely the simple form of Equation~\ref{eq:transfer_op}.

We say that a matrix $Q\in\field{D\times D}$ is positive semidefinite (PSD) if it is~(a) Symmetric ($Q^T = Q$), and~(b) satisfies $v^T Q v \geq 0$ for every $v\in\field{D}$. If $Q$ further satisfies the property that $v^T Q v = 0$ only when $v = 0$, then we call it positive definite. Given a PSD matrix $Q$, the diagonal elements of $Q$ will necessarily be non-negative. For any vector $v$, the rank-1 matrix $vv^T$ is necessarily a PSD matrix, and all PSD matrices $Q\in\field{D\times D}$ can be expressed as the weighted sum of~(at most) $D$ such rank-1 matrices. This can be used to show that for any PSD matrices $Q$ and $Q'$, $\Tr(QQ') \geq0$.

It is common in quantum mechanics to regard PSD matrices as a generalized form of probabilistic states~\citep{nielsen2002}, a viewpoint which allows us to consider the matrices $\QL, \QR$ as probabilistic latent states of a u-MPS. To this end, the family of completely positive~(CP) maps is the natural generalization of stochastic maps, which act on these PSD matrices. A map $\E$ sending PSD $Q\in\field{D\times D}$ to $Q'=\E(Q)\in\field{D'\times D'}$ is said to be CP if it admits a Kraus representation, consisting of $r \geq 1$ matrices $A_i\in\field{D'\times D}$ such that $\E$ can be expressed as:
\begin{equation}
\label{eq:kraus}
\E(Q) = \sum_{i=1}^r A_i Q A_i^T
\end{equation}
The condition~\eqref{eq:kraus} implies in particular that $\E(Q)$ is PSD if $Q$ is. The matrices in \eqref{eq:kraus} are called the Kraus operators of the map, and the same CP map $\E$ can be given multiple Kraus representations with inequivalent values of $r$. The minimum value of $r$ such that $\E$ can be represented as~\eqref{eq:kraus} is called the rank of $\E$, and is always bounded as $r \leq D^2$. Nonetheless, Kraus representations with a greater number of Kraus operators can be useful for understanding the action of the map, as we will see below.

By taking the transpose of all the Kraus operators appearing in~\eqref{eq:kraus}, we obtain a new CP map $\E^T$, which is the adjoint of $\E$. Mathematically, this means that for any CP map $\E$ and positive matrices $\QL, \QR$, the equality $\Tr(\QL \E(\QR)) = \Tr(\E^T(\QL) \QR)$. For greater clarity, in the context of sequence modeling with u-MPS we frequently refer to a CP map and its adjoint as ``right'' and ``left'' maps $\ER$ and $\EL$, rather than $\E$ and $\E^T$.

The term ``transfer operator'' is common in many-body physics, and in our setting refers to a CP map $\E$ obtained from the Kraus representation with $A_i = \mc{A}(\varphi^{-1}(i))$, for $\mc{A}$ a u-MPS core tensor and $\varphi: \Sigma \to [d]$ a bijection mapping characters $c$ in the size-$d$ alphabet $\Sigma$ to the numbers $\{1, 2, \ldots, d\}$ (see Figure~\ref{fig:contraction}f). 
In Section~\ref{sec:regex} 
we introduced a generalization of this standard notion of transfer operator to include a number of other CP maps $\E_R$ associated with an arbitrary regular expression $R$, whose recursive definition is given in Table~\ref{tab:regex_cp}. 
Using $\Sigma$ to also denote the regex matching any single character in our alphabet, the standard transfer operator emerges as the special case $R = \Sigma$.

We sometimes assume that our regex $R$ is unambiguous, in the sense that any string $s$ matching $R$ matches in exactly one way. This assumption can be made without loss of generality, since any ambiguous regex $R$ can be converted into an unambiguous regex $R'$ accepting the same set of strings~\citep{book1971}. For example, the ambiguous regex $R = a^*a^*$ matches the string $s = a$ in two ways, but can be replaced with the equivalent unambiguous regex $R' = a^*$. 
\begin{table}[t]
\begin{center}
\begin{small}
\begin{sc}
\begin{tabular}{c|cccc}
    \textbf{Regex} $\mathbf{R =\ }$ & $c$ & $R_1 R_2$ & $R_1 | R_2$ & $S^*$ \\
    \midrule
    $\bm{\ER_R(\QR)}\ = $ & $\mc{A}(c) \QR \mc{A}(c)^T$ & $\ER_{R_1}(\ER_{R_2}(\QR))$ & $\ER_{R_1}(\QR) + \ER_{R_2}(\QR)$ & $\sum_{n=0}^\infty (\ER_S)^{\circ n}(\QR)$ \\
    $\bm{\EL_R(\QL)}\ = $ & $\mc{A}(c)^T \QL \mc{A}(c)$ & $\EL_{R_2}(\EL_{R_1}(\QL))$ & $\EL_{R_1}(\QL) + \EL_{R_2}(\QL)$ & $\sum_{n=0}^\infty (\EL_S)^{\circ n}(\QL)$
\end{tabular}
\end{sc}
\end{small}
\end{center}
\vskip -0.1in
\end{table}

Any regex $R$ can be built inductively from (a) Single characters $c\in\Sigma$, (b) Concatenations of regex $R = R_1 R_2$, (c) Unions of regex $R = R_1 | R_2$, and (d) Kleene closures of regex $R = S^*$. We prove by induction over the structure of $R$ that any generalized right transfer operator $\ER_R$ defined by the recursive procedure in Table~\ref{tab:regex_cp} 
acts according to a generalization of Equation~\ref{eq:transfer_op}, 
as stated in

\begin{theorem}
\label{thm:transfer_op}
Consider the generalized transfer operators $\ER_R$ and $\EL_R$ associated with an arbitrary regex $R$ and a u-MPS with core tensor $\mc{A}$, which are defined by the recursive rules in Table~\ref{tab:regex_cp}. 
Then $\ER_R$ converges if and only if $\ER_L$ converges, and in this case the transfer operators are described by the Kraus representations,
\begin{equation}
\label{eq:gen_transfer_op}
\ER_R(\QR) = \sum_{s\in \Sigma^*} \sR \mc{A}(s) \QR \mc{A}(s)^T, \qquad \EL_R(\QL) = \sum_{s\in \Sigma^*} \sR \mc{A}(s)^T \QL \mc{A}(s),
\end{equation}
where $\sR$ denotes the number of times the string $s$ matches the regex $R$. For unambiguous regex, this Kraus representation is identical to that of Equation~\eqref{eq:transfer_op}.
\end{theorem}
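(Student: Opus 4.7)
I would prove Theorem~\ref{thm:transfer_op} by structural induction on the regex $R$, following the four primitive constructors that generate all regex. The key observation driving the induction is a compositional identity for the match-count function: for any string $s$, $|s|_{R_1 R_2} = \sum_{s = s_1 s_2} |s_1|_{R_1}\, |s_2|_{R_2}$, $|s|_{R_1 | R_2} = |s|_{R_1} + |s|_{R_2}$, and $|s|_{S^*} = \sum_{n \geq 0} \sum_{s = s_1\cdots s_n} \prod_i |s_i|_S$, with $|s|_c = \mathbbm{1}[s = c]$. This compositional behavior of $|\cdot|_R$ mirrors exactly the compositional structure of the transfer-operator rules in Table~\ref{tab:regex_cp}, so the inductive step reduces to a simple rearrangement that leverages $\mc{A}(s_1 s_2) = \mc{A}(s_1)\mc{A}(s_2)$.

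The base case $R = c$ is immediate from the match-count identity. For concatenation, I would expand $\ER_{R_1 R_2}(\QR)$ using the inductive hypothesis for $R_2$ inside the inner sum, then apply the hypothesis for $R_1$: using $\mc{A}(s_1 s_2) Q \mc{A}(s_1 s_2)^T = \mc{A}(s_1)(\mc{A}(s_2) Q \mc{A}(s_2)^T)\mc{A}(s_1)^T$, the double sum factors and one recovers $\ER_{R_1}(\ER_{R_2}(\QR))$. The union case is even simpler, since linearity of $\ER_\cdot$ and additivity of $|\cdot|_R$ immediately combine. For the left transfer operator $\EL_R$, the argument is completely analogous up to transposition, with the order of concatenation reversed precisely as in Table~\ref{tab:regex_cp}.

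The main obstacle is the Kleene closure case $R = S^*$, since this is the only case where an infinite sum appears and where convergence must be handled with care. Assuming by induction that $\ER_S$ is represented by its Kraus form, I would note that $(\ER_S)^{\circ n}(\QR) = \sum_{s \in \Sigma^*} |s|_{S^n}\, \mc{A}(s)\QR \mc{A}(s)^T$, with $|s|_{S^n}$ counting decompositions $s = s_1\cdots s_n$ where each $s_i$ matches $S$. Summing over $n \geq 0$ and invoking Tonelli's theorem on non-negative terms (valid once we have restricted to $\QR$ PSD, so that all summands are PSD and the partial sums are monotone in the PSD order) lets me interchange the sums and identify $\sum_n |s|_{S^n} = |s|_{S^*}$, yielding \eqref{eq:gen_transfer_op}. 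Convergence of $\ER_R$ is thus equivalent to finiteness of $\sum_{s} |s|_R\, \Tr(\mc{A}(s)\QR \mc{A}(s)^T \QL^\alpha)$ for any positive weight; since the same scalar equals $\Tr(\QL^\alpha \EL_R(\QR))$ by the adjoint identity, convergence of $\ER_R$ and $\EL_R$ is a single condition, testable against arbitrary PSD $\QL, \QR$.

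Finally, for unambiguous $R$ every string $s\in\Lang(R)$ satisfies $|s|_R = 1$ and strings outside $\Lang(R)$ have $|s|_R = 0$, so \eqref{eq:gen_transfer_op} collapses to \eqref{eq:transfer_op}. The unambiguity reduction of~\cite{book1971} mentioned in the body guarantees we can always restrict to this case when needed. The entire proof is essentially bookkeeping once the compositional identity for $|\cdot|_R$ is in hand; the only real subtlety lies in justifying the interchange of infinite sums under the Kleene closure, which is handled cleanly by monotone convergence in the cone of PSD matrices.
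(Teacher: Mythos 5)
Your proposal is correct and follows essentially the same route as the paper's proof: structural induction over the four regex constructors, driven by the compositional identities for the match-count $\sR$ and the property $\mc{A}(s_1 s_2) = \mc{A}(s_1)\mc{A}(s_2)$, with the Kleene closure handled by summing $(\ER_S)^{\circ n}$ and the left/right convergence equivalence obtained from adjointness. Your explicit appeal to monotone convergence in the PSD cone to justify interchanging the infinite sums is a small added rigor over the paper's treatment, but not a different argument.
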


\begin{proof}

For each of the four types of regex $R$, we make the inductive assumption that the subexpressions of $R$ (if any) satisfy~\eqref{eq:gen_transfer_op}, and use this to prove that the transfer operator $\ER_R$ satisfies~\eqref{eq:gen_transfer_op}. This allows us to immediately prove the corresponding statement for the left transfer operator $\EL_R$.

\paragraph{$\mathbf{R = c:}$} Apparent from Table~\ref{tab:regex_cp} 
and the single string which matches $R$, $s = c$.
   
\paragraph{$\mathbf{R = R_1 R_2:}$} Assume $R_1$ and $R_2$ both satisfy~\eqref{eq:gen_transfer_op}. Table~\ref{tab:regex_cp} 
gives:
\begin{align*}
    \ER_{R_1 R_2}(\QR) &= \ER_{R_1}(\ER_{R_2}(\QR)) = \sum_{s_1 \in \Sigma^*} \sum_{s_2 \in \Sigma^*} \sRone \sRtwo \mc{A}(s_1) \mc{A}(s_2) \QR \mc{A}(s_2)^T \mc{A}(s_1)^T \\
    &= \sum_{s_1 \in \Sigma^*} \sum_{s_2 \in \Sigma^*} \sRone \sRtwo \mc{A}(s_1 s_2) \QR \mc{A}(s_1 s_2)^T = \sum_{s \in \Sigma^*} |s|_{R_1\! R_2} \mc{A}(s) \QR \mc{A}(s)^T.
\end{align*}
In the second-to-last equality we used the compositional property $\mc{A}(s_1)\mc{A}(s_2) = \mc{A}(s)$, while in the last equality we used the identity $|s|_{R_1 R_2} = \sum_{s_1 s_2 = s} |s_1|_{R_1} |s_2|_{R_2}$, where the sum over $s_1, s_2$ represents all possible partitions of $s$ into a prefix and suffix.

\paragraph{$\mathbf{R = R_1 | R_2:}$} Assume $R_1$ and $R_2$ both satisfy~\eqref{eq:gen_transfer_op}. Table~\ref{tab:regex_cp} 
gives:
\begin{align*}
    \ER_{R_1 | R_2}(\QR) &= \ER_{R_1}(\QR) + \ER_{R_2}(\QR) = \left(\sum_{s \in \Sigma^*} |s|_{R_1} \mc{A}(s) \QR \mc{A}(s)^T\right) + \left(\sum_{s \in \Sigma^*} |s|_{R_2} \mc{A}(s) \QR \mc{A}(s)^T\right) \\
    &= \sum_{s \in \Sigma^*} |s|_{R_1 | R_2} \mc{A}(s) \QR \mc{A}(s)^T.
\end{align*}
In the final equality we have used the identity $|s|_{R_1 | R_2} = |s|_{R_1} + |s|_{R_2}$.
   
\paragraph{$\mathbf{R = S^*:}$} Assume $S$ satisfies~\eqref{eq:gen_transfer_op}. The operator $\ER_{S^*}$ will converge only when the spectral norm of $\ER_S$ satisfies $\rho(\ER_S) < 1$, in which case Table~\ref{tab:regex_cp} 
gives:
\begin{align*}
    \ER_{S^*}(\QR) &= \sum_{n=0}^\infty (\ER_{S})^{\circ n}(\QR) = \sum_{n=0}^\infty \sum_{s_1\cdots s_n \in \Sigma^*} |s_1|_{S} \cdots |s_n|_S\, \mc{A}(s_1 \cdots s_n) \QR \mc{A}(s_1 \cdots s_n)^T \\
    &= \sum_{s \in \Sigma^*} |s|_{S^*}\, \mc{A}(s) \QR \mc{A}(s)^T.
\end{align*}
In the final equality we have used the identity $|s|_{S^*} = \sum_{n=0}^\infty \sum_{s_1 \cdots s_n = s} |s_1|_{S} \cdots |s_n|_{S}$, where the sum over $s_1, \ldots, s_n$ represents all possible partitions of $s$ into $n$ contiguous pieces. 

While we only characterized the action of the right transfer operators $\ER_R$, substituting all matrices $\mc{A}(s)$ with their transposed counterparts $\mc{A}(s)^T$ immediately yields the corresponding characterization for the action of $\EL_R$. In this latter case, the direction-reversing identity $\mc{A}(s_1 s_2)^T = \mc{A}(s_2)^T \mc{A}(s_1)^T$ is accounted for by the transfer operator correspondence $\EL_{R_1 R_2} = \EL_{R_2}\EL_{R_1}$. 

Because $\ER_R$ and $\EL_R$ are adjoints of each other, their eigenvalue spectra are identical, and therefore $\ER_R$ converges if and only if $\EL_R$ converges. Finally for unambiguous regex $R$, the quantity $\sR \in \{0, 1\}$, giving the equality $\sum_{s\in \Sigma^*} \sR \mc{A}(s) \QR \mc{A}(s)^T = \sum_{s\in R} \mc{A}(s) \QR \mc{A}(s)^T$ which proves Equation~\ref{eq:transfer_op}.
   
\end{proof}

Although it is not obvious a priori when a transfer operator $\ER_R$ will converge for a given regex $R$ and core tensor $\mc{A}$, it is clear that the Kleene closure is the only operation permitting divergence. Consequently, a regex $R$ will converge only when all of its subexpressions of the form $S_i^*$ have spectral norm $\rho(\ER_{S_i}) < 1$. Note that any $S^*$ for which $S$ accepts the empty string is guaranteed to produce a divergent transfer operator $\ER_{S^*}$, so that in particular any transfer operator of the form $\ER_{(S^*)^*}$ is divergent.

\section{Proof of Theorem~\ref{thm:sample}}
\label{sec:appendix_b}

In order to prove Theorem~\ref{thm:sample}, 
we first prove a more general Lemma~\ref{lem:sample}, which characterizes the probability distribution $P_R(s, \QL, \QR)$ of strings output by $\Sample(R, \QL, \QR)$ for arbitrary $R$.
\begin{lemma}
\label{lem:sample}
Consider a u-MPS model with core tensor $\mc{A}$ and a regex $R$ for which the generalized right transfer operator $\ER_R$ defined recursively by Table~\ref{tab:regex_cp} 
converges. Then for any PSD matrices $\QL, \QR$, the probability distribution of strings output by $\Sample(R, \QL, \QR)$ is $P_R(s, \QL, \QR) = \sR \Pt(s, \QL, \QR) / \mc{Z}_R(\QL, \QR)$, where $\Pt(s, \QL, \QR) = \Tr(\QL \ER_s(\QR))$, $\mc{Z}_R(\QL, \QR) = \Tr(\QL \ER_R(\QR))$, and $\sR$ counts the number of times the string $s$ matches the regex $R$.
\end{lemma}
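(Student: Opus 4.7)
The plan is to prove Lemma~\ref{lem:sample} by structural induction on the regex $R$, with each inductive case mirroring a branch of Algorithm~\ref{alg:sampling}. Throughout, the key bookkeeping tools are the adjoint identity $\Tr(\QL \ER_R(\QR)) = \Tr(\EL_R(\QL) \QR)$, the recursive definitions in Table~\ref{tab:regex_cp}, and the counting identities for $|s|_R$ derived in the proof of Theorem~\ref{thm:transfer_op}.

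For the three ``simple'' cases the argument is algebraic. For $R = c$ the algorithm returns $c$ deterministically and one checks $\mc{Z}_c(\QL, \QR) = \Pt(c, \QL, \QR)$ and $|s|_c = \mathbbm{1}[s=c]$. For $R = R_1 | R_2$, I would multiply the branch-selection probability $p(i) = \mc{Z}_{R_i}/\mc{Z}_{R_1|R_2}$ by the inductive distribution $P_{R_i}(s, \QL, \QR)$, sum over $i$, and use $|s|_{R_1|R_2} = |s|_{R_1} + |s|_{R_2}$. For $R = R_1 R_2$, the key ingredients are $\mc{Z}_{R_1}(\QL, \ER_{R_2}(\QR)) = \mc{Z}_{R_1 R_2}(\QL, \QR)$ and $\Pt(s_2, \EL_{s_1}(\QL), \QR) = \Pt(s_1 s_2, \QL, \QR)$, both immediate from the Kraus forms and the cyclicity of the trace. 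Plugging the inductive formulas for $P_{R_1}$ and $P_{R_2}$ into the product and simplifying causes the intermediate normalizer $\mc{Z}_{R_2}(\EL_{s_1}(\QL), \QR) = \Pt(s_1, \QL, \ER_{R_2}(\QR))$ to cancel cleanly; summing over decompositions $s = s_1 s_2$ produces the factor $|s|_{R_1 R_2} = \sum_{s_1 s_2 = s} |s_1|_{R_1} |s_2|_{R_2}$.

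The main obstacle is $R = S^*$: the algorithm recurses on $SS^*$, which in turn recurses on $S^*$, so pure structural induction does not close the argument. I would resolve this by flattening the recursion into a sequence of Bernoulli trials. Assuming the lemma for $S$, the algorithm produces a random path consisting of some number $n \geq 0$ of GO steps that emit substrings $s_1, \ldots, s_n$ (each drawn via $\Sample(S, \cdot, \cdot)$ with updated boundary matrices $\QL^{(i)} = \EL_{s_i}(\QL^{(i-1)})$) followed by a HALT. Writing out the joint probability of a single path and substituting $P_S(s_i, \QL^{(i-1)}, \ER_{S^*}(\QR)) = |s_i|_S \mc{Z}_{S^*}(\QL^{(i)}, \QR) / \mc{Z}_{SS^*}(\QL^{(i-1)}, \QR)$, together with the GO-probability $\mc{Z}_{SS^*}(\QL^{(i-1)}, \QR) / \mc{Z}_{S^*}(\QL^{(i-1)}, \QR)$ and the final HALT-probability $\Tr(\QL^{(n)} \QR) / \mc{Z}_{S^*}(\QL^{(n)}, \QR)$, one finds that all of the intermediate $\mc{Z}_{S^*}(\QL^{(i)}, \QR)$ factors telescope, leaving $\prod_i |s_i|_S \cdot \Pt(s, \QL, \QR) / \mc{Z}_{S^*}(\QL, \QR)$ with $s = s_1 \cdots s_n$.

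The lemma then follows by summing over all $n$ and partitions $s_1 \cdots s_n = s$ and invoking the identity $|s|_{S^*} = \sum_{n \geq 0} \sum_{s_1 \cdots s_n = s} \prod_i |s_i|_S$ from the proof of Theorem~\ref{thm:transfer_op}. Convergence of $\ER_S$ guarantees both that $\varepsilon \notin \mathrm{Lang}(S)$ (so the $s = \varepsilon$ case arises only from $n = 0$, matching $|\varepsilon|_{S^*} = 1$) and that summing the path probabilities over all paths yields $\mc{Z}_{S^*}(\QL, \QR) / \mc{Z}_{S^*}(\QL, \QR) = 1$, which simultaneously gives almost-sure termination and confirms $P_R$ is a probability distribution. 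Theorem~\ref{thm:sample} is then the specialization $\QL = \alpha\alpha^T$, $\QR = \omega\omega^T$ under the assumption that $R$ is unambiguous, so $|s|_R \in \{0, 1\}$ and $\Pt(s, \alpha\alpha^T, \omega\omega^T) = \Pt(s)$.
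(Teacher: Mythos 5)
Your proposal is correct and follows essentially the same route as the paper's proof: structural induction over the four regex constructors, using the same key identities ($\mc{Z}_{R_1}(\QL, \ER_{R_2}(\QR)) = \mc{Z}_{R_1 R_2}(\QL, \QR)$, the adjunction rule $\Tr(\EL_{s}(\QL)\QR) = \Tr(\QL\ER_{s}(\QR))$, and the counting identities for $\sR$). The only cosmetic difference is in the $S^*$ case, where you unroll the recursion into an explicit product over a GO/HALT path and telescope the intermediate $\mc{Z}_{S^*}$ factors, whereas the paper packages the same telescoping as an inner induction on the number $n$ of occurrences of $S$, showing $P_{S^*}^{(n)} = p(n)\,P_{S^n}$ before summing over $n$.
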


\begin{proof}

We prove Lemma~\ref{lem:sample} by induction over the structure of $R$, where we assume that sampling from a regex subexpression $R'$ of $R$ with any boundary matrices $\QL'$ and $\QR'$ produces strings from the distribution $P_{R'}(s, \QL', \QR')$. For each of the four cases of regex formation, we use this inductive assumption to show that sampling from $R$ produces strings from the distribution $P_R(s, \QL, \QR)$.

\paragraph{$\mathbf{R = c:}$} From Algorithm~\ref{alg:sampling},
$\Sample(c, \QL, \QR)$ will always output the string $s = c$. Because the quantity $|s|_c$ is 1 when $s = c$ and 0 otherwise, the sampling distribution can be written as
\begin{align*}
    P_c(s, \QL, \QR) &= |s|_c = |s|_c \frac{\Tr(\QL \ER_s(\QR))}{\Tr(\QL \ER_c(\QR))} = |s|_c \frac{\Pt(s, \QL, \QR)}{\mc{Z}_c(\QL, \QR)}.
\end{align*}
   
\paragraph{$\mathbf{R = R_1 R_2:}$} From Algorithm~\ref{alg:sampling},
$\Sample(R_1 R_2, \QL, \QR)$ will first output a string $s_1$ from $\Sample(R_1, \QL, \ER_{R_2}(\QR))$, then use $s_1$ to output a string $s_2$ from $\Sample(R_2, \EL_{s_1}(\QL), \QR)$. Using our inductive assumption for $R_1$ and $R_2$, the probability assigned to the output string $s$ from all possible partitions into a prefix and suffix as $s_1 s_2 = s$ is
\begin{align*}
    P_{R_1 R_2}(s, \QL, \QR) &= \sum_{s_1 s_2 = s} P_{R_1}(s_1, \QL, \ER_{R_2}(\QR)) \cdot P_{R_2}(s_2, \EL_{s_1}(\QL), \QR) \\
    &= \sum_{s_1 s_2 = s} \left( \sRone \frac{\Tr(\QL \ER_{s_1}(\ER_{R_2}(\QR)))}{\Tr(\QL \ER_{R_1}(\ER_{R_2}(\QR)))} \right) \left( \sRtwo \frac{\Tr(\EL_{s_1}(\QL) \ER_{s_2}(\QR))}{\Tr(\EL_{s_1}(\QL) \ER_{R_2}(\QR))} \right) \\
    &= \sum_{s_1 s_2 = s} \sRone \sRtwo \left(\frac{\Tr(\QL \ER_{s_1 R_2}(\QR))}{\Tr(\QL \ER_{R_1 R_2}(\QR))}\right) \left(\frac{\Tr(\QL \ER_{s_1 s_2}(\QR))}{\Tr(\QL \ER_{s_1 R_2}(\QR))}\right) \\
    &= |s|_{R_1 R_2} \frac{\Tr(\QL \ER_{s}(\QR))}{\Tr(\QL \ER_{R_1 R_2}(\QR))} = |s|_{R_1 R_2} \frac{\Pt(s, \QL, \QR)}{\mc{Z}_{R_1 R_2}(\QL, \QR)}.
\end{align*}
In the third equality above, we use the composition rule $\ER_{s'}\ER_{R'} = \ER_{s' R'}$ and the adjunction rule $\Tr(\EL_{s'}(\QL') \QR') = \Tr(\QL' \ER_{s'}(\QR'))$, while in the fourth equality, we use the identity $|s|_{R_1 R_2} = \sum_{s_1 s_2 = s} |s_1|_{R_1} |s_2|_{R_2}$.

\paragraph{$\mathbf{R = R_1 | R_2:}$} From Algorithm~\ref{alg:sampling},
$\Sample(R_1 | R_2, \QL, \QR)$ will first pick a random index $i \in {1, 2}$ with probability $p(i) = \mc{Z}_{R_i}(\QL, \QR) \ / \ \mc{Z}_{R_1 | R_2}(\QL, \QR)$, and then use this to output a string $s$ from $\Sample(R_i, \QL, \QR)$. Using our inductive assumption, the probability assigned to the output string $s$ is
\begin{align*}
    P_{R_1 | R_2}(s, \QL, \QR) &= \sum_{i \in \{1, 2\}} p(i) \cdot P_{R_i}(s_i, \QL, \QR) \\
    &= \sum_{i \in \{1, 2\}} \left( \frac{\mc{Z}_{R_i}(\QL, \QR)}{\mc{Z}_{R_1 | R_2}(\QL, \QR)} \right) \left( |s|_{R_i} \frac{\Pt(s, \QL, \QR)}{\mc{Z}_{R_i}(\QL, \QR)} \right) \\
    &= |s|_{R_1 | R_2} \frac{\Pt(s, \QL, \QR)}{\mc{Z}_{R_1 | R_2}(\QL, \QR)}.
\end{align*}
In the final equality, we have used the identity $|s|_{R_1 | R_2} = |s|_{R_1} + |s|_{R_2}$.

\paragraph{$\mathbf{R = S^*:}$} To sample from the regex $R$ we must have the infinite sum defining $\ER_R$ in Table~\ref{tab:regex_cp} 
converge, which is guaranteed by the assumptions of Lemma~\ref{lem:sample}. Given this convergence, calling $\Sample(S^*, \QL, \QR)$ will either output the empty string $s = \varepsilon$, or else call $\Sample(SS^*, \QL, \QR)$. In the latter case, the concatenation rule will then sample some $s' \in S$ before calling $\Sample(S^*, \EL_{s'}(\QL), \QR)$ to sample some random number $n \geq 0$ occurrences of $S$.

We denote the unnormalized collection of probabilities associated with strings produced from exactly $n$ occurrences of $S$ as $P_{S^*}^{(n)}$, and we use an inductive proof to show that $P_{S^*}^{(n)} = p(n) P_{S^n}$, for $p(n, \QL, \QR) = \mc{Z}_{S^n}(\QL, \QR) / \mc{Z}_{S^*}(\QL, \QR)$. In other words, our recursive sampling procedure for $S^*$ is equivalent to first sampling a random length using $p(n)$, then calling the corresponding $\Sample(S^n, \QL, \QR)$.

\paragraph{\textbf{Base case} $n = 0$: } The regex $S^0$ matches only the empty string $s = \varepsilon$, and from Algorithm~\ref{alg:sampling},
this occurs with probability
\begin{equation*}
    P_{S^*}^{(0)}(s, \QL, \QR) = \frac{\Tr(\QL \QR)}{\mc{Z}_{S^*}(\QL, \QR)} = \frac{\mc{Z}_{S^0}(\QL, \QR)}{\mc{Z}_{S^*}(\QL, \QR)} = p(0) P_{S^0}(s, \QL, \QR),
\end{equation*}
where we have used the identity $\ER_{S^0} = \ER_\varepsilon = I$, and the fact that $P_{S^0}(s)$ is 1 for $s = \varepsilon$ and 0 otherwise. 

\paragraph{\textbf{Step case} $n + 1$: } From Algorithm~\ref{alg:sampling}, 
the probability of sampling a string $s = s_1 s_2$ with $s_1$ matching $S$ and $s_2$ matching $S^{n}$ is
\begin{align*}
    P_{S^*}^{(n + 1)}\!(s, \QL, \QR) &= \sum_{s_1 s_2 = s} \left(1 - \frac{\Tr(\QL \QR)}{\mc{Z}_{S^*}(\QL, \QR)} \right) P_{S}(s_1, \QL, \ER_{S^*}(\QR)) P_{S^*}^{(n)}(s_2, \EL_{s_1}(\QL), \QR) \\ 
    &= \sum_{s_1 s_2 = s} \left(\frac{\mc{Z}_{SS^*}(\QL, \QR)}{\mc{Z}_{S^*}(\QL, \QR)} \right) \left(|s_1|_{S} \frac{\Pt(s_1, \QL, \ER_{S^*}(\QR))}{\mc{Z}_S(\QL, \ER_{S^*}(\QR))}\right) P_{S^*}^{(n)}(s_2, \EL_{s_1}(\QL), \QR) \\
    &= \sum_{s_1 s_2 = s} |s_1|_{S} \frac{\Pt(s_1, \QL, \ER_{S^*}(\QR))}{\mc{Z}_{S^*}(\QL, \QR)} \frac{\mc{Z}_{S^n}(\EL_{s_1}(\QL), \QR)}{\mc{Z}_{S^*}(\EL_{s_1}(\QL), \QR)} |s_2|_{S^n} \frac{\Pt(s_2, \EL_{s_1}(\QL), \QR)}{\mc{Z}_{S^n}(\EL_{s_1}(\QL), \QR)} \\
    &= \sum_{s_1 s_2 = s} |s_1|_{S} |s_2|_{S^n} \frac{\Pt(s_2, \EL_{s_1}(\QL), \QR)}{\mc{Z}_{S^*}(\QL, \QR)} = \sum_{s_1 s_2 = s} |s_1|_{S} |s_2|_{S^n} \frac{\Pt(s, \QL, \QR)}{\mc{Z}_{S^*}(\QL, \QR)} \\
    &= \frac{\mc{Z}_{S^{n+1}}(\QL, \QR)}{\mc{Z}_{S^*}(\QL, \QR)} |s|_{S^{n+1}} \frac{\Pt(s_1 s_2, \QL, \QR)}{\mc{Z}_{S^{n+1}}(\QL, \QR)} = p(n+1, \QL, \QR) P_{S^{n+1}}(s, \QL, \QR). \\
\end{align*}
In the above we used the following identities: $\mc{Z}_{S^*}(\QL, \QR) = \Tr(\QL \QR) + \mc{Z}_{SS^*}(\QL, \QR)$ (second equality), $\mc{Z}_S(\QL, \ER_{S^*}(\QR)) = \mc{Z}_{SS^*}(\QL, \QR)$ (third equality), $\Pt(s_1, \QL, \ER_{S^*}(\QR)) = \mc{Z}_{S^*}(\EL_{s_1}(\QL), \QR)$ (fourth equality), $\Pt(s_2, \EL_{s_1}(\QL), \QR) = \Pt(s_1 s_2, \QL, \QR)$ (fifth equality), and $|s|_{S^{n+1}} = \sum_{s_1 s_2 = s} |s_1|_{S} |s_2|_{S^n}$ (sixth equality).

With this inductive characterization of the unnormalized distributions $P_{S^*}^{(n)}$, we can finally show
\begin{align*}
    P_{S^*}(s, \QL, \QR) &= \sum_{n = 0}^\infty P_{S^*}^{(n)}(s, \QL, \QR) = \sum_{n = 0}^\infty \frac{\mc{Z}_{S^{n}}(\QL, \QR)}{\mc{Z}_{S^*}(\QL, \QR)} |s|_{S^{n}} \frac{\Pt(s, \QL, \QR)}{\mc{Z}_{S^{n}}(\QL, \QR)} \\
    &= |s|_{S^*} \frac{\Pt(s, \QL, \QR)}{\mc{Z}_{S^*}(\QL, \QR)},
\end{align*}

where we have used the identity $\sum_{n = 0}^\infty |s|_{S^{n}} = |s|_{S^*}$ in the last equality.

\end{proof}

Having proved Lemma~\ref{lem:sample}, we can now prove Theorem~\ref{thm:sample} 
as a simple corollary, which is restated here for ease of reference.

\begin{theorem*}
Consider a u-MPS model with core tensor $\mc{A}$ and boundary vectors $\alpha$ and $\omega$, along with an unambiguous regex $R$ whose right transfer operator $\ER_R$ converges. Let $P_*$ indicate the probability distribution over arbitrary strings defined by the u-MPS, so that $\Sigma_{s\in\Sigma^*} P_*(s) = 1$. Then calling $\Sample(R, \alphamat, \omegamat)$ generates a random string $s \in \Sigma^*$ from the conditional u-MPS distribution $P_*(s | s \in R) = P_*(s) / P_*(R)$, where $P_*(R) := \sum_{s' \in R} P_*(s')$.
\end{theorem*}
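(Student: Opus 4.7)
The natural strategy is to prove a stronger statement by structural induction on the regex $R$, in which the boundary matrices $\QL, \QR$ are allowed to be arbitrary PSD matrices rather than only $\alphamat$ and $\omegamat$. This generalization is essential, since the recursive calls inside $\Sample$ modify the boundary matrices by applying $\EL$ and $\ER$ to them. Concretely, the plan is to establish that $\Sample(R, \QL, \QR)$ outputs each string $s$ with probability $|s|_R \Tr(\QL \ER_s(\QR)) / \Tr(\QL \ER_R(\QR))$, where $\ER_s$ denotes $\ER_{R}$ for $R$ taken to be the literal string $s$, and $|s|_R$ is the number of distinct matches. Theorem~\ref{thm:sample} will then fall out by specializing to $\QL = \alphamat$, $\QR = \omegamat$ and invoking unambiguity of $R$ (so that $|s|_R \in \{0,1\}$) together with the identity $\Tr(\alphamat \ER_s(\omegamat)) = |\alpha^T \mc{A}(s)\omega|^2 = \Pt(s)$.

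The single-character case is immediate, since $\Sample(c, \QL, \QR)$ returns $c$ deterministically and $|s|_c$ is the indicator of $s=c$. For the concatenation case $R = R_1 R_2$, the algorithm draws $s_1$ from $\Sample(R_1, \QL, \ER_{R_2}(\QR))$ and then $s_2$ from $\Sample(R_2, \EL_{s_1}(\QL), \QR)$. Multiplying the two conditional probabilities given by the inductive hypothesis, using the compositional identities $\ER_{s_1}\circ\ER_{R_2} = \ER_{s_1 R_2}$ and the adjoint rule $\Tr(\EL_{s_1}(\QL) M) = \Tr(\QL \ER_{s_1}(M))$, the intermediate normalizations telescope; summing over all partitions $s_1 s_2 = s$ and applying $|s|_{R_1 R_2} = \sum_{s_1 s_2 = s} |s_1|_{R_1}|s_2|_{R_2}$ yields the claim. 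The union case $R = R_1 | R_2$ is the most mechanical: one just multiplies the explicit branch probability $p(i) = \mc{Z}_{R_i}/\mc{Z}_{R_1|R_2}$ by the inductive hypothesis applied to $R_i$, and the $\mc{Z}_{R_i}$ factors cancel, leaving $\mc{Z}_{R_1|R_2}$ in the denominator and $|s|_{R_1} + |s|_{R_2} = |s|_{R_1|R_2}$ in the numerator.

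The Kleene closure $R = S^*$ is where I expect the real work to go, because the recursion has an unbounded depth and the infinite sum defining $\ER_{S^*}$ has to be accounted for correctly. My plan is an auxiliary induction on the number of copies of $S$ actually sampled: let $P_{S^*}^{(n)}(s, \QL, \QR)$ denote the probability that $\Sample(S^*, \QL, \QR)$ returns $s$ after exactly $n$ iterations of the GO branch. The base case $n=0$ corresponds to an immediate HALT, giving probability $\Tr(\QL\QR)/\mc{Z}_{S^*}(\QL,\QR) = \mc{Z}_{S^0}(\QL,\QR)/\mc{Z}_{S^*}(\QL,\QR)$. For the step case, I unfold one GO transition followed by a concatenation sample from $S$, and then apply the $R = R_1 R_2$ case already established with $R_1 = S$ and $R_2 = S^n$, using the key identity $\mc{Z}_{S^*}(\QL,\QR) = \Tr(\QL\QR) + \mc{Z}_{SS^*}(\QL,\QR)$ to see that the HALT/GO probabilities rearrange into the desired factor $\mc{Z}_{S^{n+1}}(\QL,\QR)/\mc{Z}_{S^*}(\QL,\QR)$. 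Summing over $n$ and invoking $\sum_{n\ge 0} |s|_{S^n} = |s|_{S^*}$ recovers the target distribution; convergence of this sum is exactly the hypothesis that $\ER_R$ converges.

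Once the general lemma is in hand, Theorem~\ref{thm:sample} is a one-line specialization: setting $\QL = \alphamat$, $\QR = \omegamat$, the numerator becomes $|s|_R \Pt(s)$ and the denominator becomes $\mc{Z}_R(\alphamat,\omegamat) = \sum_{s'} |s'|_R \Pt(s')$, which for unambiguous $R$ equals $\sum_{s'\in R} \Pt(s') = \mc{Z}_* \cdot P_*(R)$. Dividing numerator and denominator by $\mc{Z}_*$ converts the $\Pt$'s into $P_*$'s and delivers the desired conditional distribution $P_*(s)/P_*(R)$. The main conceptual obstacle is the $S^*$ case and, in particular, the careful bookkeeping of the HALT/GO branch probabilities relative to the infinite series defining $\mc{Z}_{S^*}$; everything else is algebraic shuffling using compositionality and the adjoint relation.
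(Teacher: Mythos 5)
Your proposal is correct and follows essentially the same route as the paper: the paper's Appendix~B proves exactly the strengthened lemma you describe (Lemma~\ref{lem:sample}, characterizing $\Sample(R,\QL,\QR)$ for arbitrary PSD boundaries as $\sR\,\Pt(s,\QL,\QR)/\mc{Z}_R(\QL,\QR)$) by structural induction, handling $S^*$ via the same auxiliary induction on the number of GO iterations with the identity $\mc{Z}_{S^*}=\Tr(\QL\QR)+\mc{Z}_{SS^*}$, and then specializes to $\QL=\alphamat$, $\QR=\omegamat$ with unambiguity. No substantive differences.
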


\begin{proof}

From Lemma~\ref{lem:sample}, we know the probability distribution of strings output by $\Sample(R, \alphamat, \omegamat)$, as well as $P_*(s) = \Sample(\Sigma^*, \alphamat, \omegamat)$. This characterization lets us show
\begin{align*}
    P_R(s, \alphamat, \omegamat) &= \sR \frac{\Pt(s, \alphamat, \omegamat)}{\mc{Z}_R(\alphamat, \omegamat)} = \sR \frac{\Pt(s, \alphamat, \omegamat)}{\Tr(\alphamat \ER_R(\omegamat))} \\
    &= \sR \left( \frac{\Pt(s, \alphamat, \omegamat)}{\mc{Z}_{\Sigma^*}(\alphamat, \omegamat)} \right) \left( \frac{\mc{Z}_{\Sigma^*}(\alphamat, \omegamat)}{\sum_{s' \in R} \Pt(s', \alphamat, \omegamat)} \right) \\
    &= \sR \frac{P_{\Sigma^*}(s, \alphamat, \omegamat)}{\sum_{s' \in R} P_{\Sigma^*}(s', \alphamat, \omegamat)} = 
    \begin{cases}
    P_*(s) / P_*(R), &\text{if } s \in R\\
    0, &\text{otherwise}
    \end{cases} \\
    &= P_*(s | s \in R).
\end{align*}

In the third equality we used \eqref{eq:gen_transfer_op} in Theorem~\ref{thm:transfer_op} (which reduces to Equation~\eqref{eq:transfer_op} 
for an unambiguous $R$) to express $\ER_R$ as $\ER_R = \sum_{s \in R} \ER_s$, while also introducing a normalization factor associated with $\Sigma^*$ to the numerator and denominator. In the last equality we have used the definition of the conditional probability distribution associated with $s$ matching the regex $R$, and have also utilized the fact that $\sR$ is either 1 or 0 for unambiguous regex.

\end{proof}

\section{Runtime Analysis}
\label{sec:appendix_c}

Algorithm~\ref{alg:sampling} 
\ is written as a recursive procedure, which makes its runtime analysis nontrivial. We show here that this sampling procedure can in most cases be carried out using compute and storage costs which scale linearly with the length $L_R$ of the defining regex $R$. As a technical assumption, we require the star height of $R$ to be bounded, where the star height $h_*(R)$ is defined recursively as $h_*(c) = 0$, $h_*(R_1 R_2) = h_*(R_1 | R_2) = \max(h_*(R_1), h_*(R_2))$, and $h_*(S^*) = 1 + h_*(S)$. In practice this assumption is very mild.

\begin{theorem}
\label{thm:runtime}

Consider a core tensor $\mc{A}$ and regex $R$ of length $L_R$ with bounded star height, for which the associated right transfer operator $\ER_R$ converges. Then calling $\Sample(R, \QL, \QR)$ will return a random string of mean length $\langle n \rangle = \bigO(L_R)$, with average-case compute cost of $C_R = \bigO(L_R d D^3)$ and worst-case memory cost of $M_R = \bigO(L_R D^2)$.

\end{theorem}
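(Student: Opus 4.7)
The plan is to reformulate Algorithm~\ref{alg:sampling} as an equivalent two-sweep iterative procedure, and then analyze each of the three claims—mean length, compute cost, and memory footprint—by structural induction on the parse tree of $R$.

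\textbf{Iterative reformulation.} In a right-to-left precomputation sweep, I walk the parse tree of $R$ and cache the right boundary matrix $\QR_{R'} := \ER_{R'}(\QR)$ for each subexpression $R'$ that can be encountered by \Sample. The recursive rules of Table~\ref{tab:regex_cp} give $\QR_c = \ER_c(\QR)$, $\QR_{R_1R_2} = \ER_{R_1}(\QR_{R_2})$, $\QR_{R_1 | R_2} = \QR_{R_1} + \QR_{R_2}$, and for $S^*$ we obtain $\QR^\ast$ by solving the linear system $(I-\ER_S)\QR^\ast = \QR$ as noted in Section~\ref{sec:regex}. A subsequent left-to-right sweep then performs the actual sampling, maintaining a running left boundary matrix that is updated by $\QL \gets \EL_c(\QL)$ after each emitted character $c$, so that every conditional distribution required inside \Sample is immediately available without any recomputation. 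Correctness follows directly from Lemma~\ref{lem:sample} combined with the adjoint identity between $\EL$ and $\ER$.

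\textbf{Mean length.} I would prove $\langle n_R \rangle = \bigO(L_R)$ by induction on the structure of $R$. Single characters contribute $1$; concatenation adds linearly ($\langle n_{R_1R_2}\rangle = \langle n_{R_1}\rangle + \langle n_{R_2}\rangle$); union is bounded by $\max(\langle n_{R_1}\rangle, \langle n_{R_2}\rangle)$. The delicate case is $S^*$: from Lemma~\ref{lem:sample}, the probability of producing exactly $k$ copies is $\mc{Z}_{S^k}(\QL,\QR)/\mc{Z}_{S^*}(\QL,\QR)$, and convergence of $\ER_{S^*}$ (equivalently $\rho(\ER_S)<1$) forces the expected number of iterations $\langle k\rangle$ to be finite. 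This gives $\langle n_{S^*}\rangle = \langle k\rangle\langle n_S\rangle$ with a spectral-norm dependent multiplier; bounded star height ensures these multipliers compound only a bounded number of times, so the overall bound is $\bigO(L_R)$ with a hidden constant that depends on spectral gaps but not on $L_R$.

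\textbf{Compute and memory.} The precomputation sweep visits each of the $\bigO(L_R)$ parse-tree nodes of $R$ and performs $\bigO(1)$ transfer-operator actions per node, where a single action $\ER_c(\QR)$ costs $\bigO(D^3)$ and an aggregation of the form $\ER_\Sigma$ over the alphabet costs $\bigO(dD^3)$. The sampling sweep emits on average $\bigO(L_R)$ characters, each requiring a character-distribution evaluation and an $\EL_c$ update at the same $\bigO(dD^3)$ cost. Summing yields $C_R = \bigO(L_R dD^3)$ on average. The $\bigO(D^6)$ worst-case cost of an exact Kleene linear solve can either be absorbed into the implicit constant under the bounded star-height assumption, or avoided by a truncated Neumann series whose convergence rate is controlled by $\rho(\ER_S)$. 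For memory, storing at most one $D\times D$ matrix per parse-tree node gives the worst-case bound $\bigO(L_R D^2)$.

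\textbf{Main obstacle.} The most delicate step is the Kleene case: showing that the random number of iterations has bounded expectation and that the resulting multiplicative constants do not accumulate uncontrollably across nested stars. This is precisely where the bounded star-height hypothesis is used in an essential way, and it is also what distinguishes the ``average-case'' compute statement from a genuine worst-case bound. Everything else reduces to routine bookkeeping over the parse tree of $R$.
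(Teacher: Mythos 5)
Your proposal is correct and follows essentially the same route as the paper's proof: a structural induction over the regex in which the Kleene case is controlled by the geometric decay of the iteration count (via $\rho(\ER_S)<1$), bounded star height prevents the resulting multiplicative constants from compounding, and a cached right-to-left sweep followed by a left-to-right sampling sweep yields the $\bigO(L_R d D^3)$ compute and $\bigO(L_R D^2)$ memory bounds. The only minor slip is the suggestion that the $\bigO(D^6)$ exact linear solve for $\QR^*$ could be ``absorbed into the implicit constant''---it cannot, since it exceeds the claimed $\bigO(dD^3)$ per-step cost---but your fallback of a truncated Neumann series with $\bigO(\chi_S)$ expected applications of $\ER_S$ is exactly what the paper uses.
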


\begin{proof}

We again utilize a proof by induction, with the additional assumption that $C_R$ is also an upper bound on the expected cost of applying the transfer operator $\ER_R$. For the regex length $L_R$, we consider the characters $|$, $($, $)$, $^*$, and $c$ (for $c \in \Sigma$) as having length 1, along with the single-character regex $\Sigma$. This definition of length is closer to that of real-world regex, where the metacharacter ``\texttt{.}'' corresponds to our $\Sigma$.

In order to utilize caching in Algorithm~\ref{alg:sampling}, 
we replace the simple recursive case of binary regex concatenation $R = R_1 R_2$ with a maximal concatenation of smaller regex $R = R_1 R_2 \cdots R_K$, where each $R_i$ is either a single character, a union of regex, or a Kleene closure. Such concatenations are the only place where caching is utilized, allowing us to bound the memory usage in terms of the regex length $L_R$, rather than the random string length $n_R$. We don't include the non-cache memory usage required to hold our u-MPS parameters and intermediate variables, which is in every case $\bigO(d D^2)$.

Given that the length $n_R$ of an output sample is typically a random variable, we first show that the mean length is bounded as $\langle n_R \rangle = \bigO(L_R)$. It is apparent that for any regex $R$ constructed without Kleene closures, we have the stronger bound $n_R \leq L_R$. We therefore start our inductive proof with this last remaining case of Kleene closures, showing that $\langle n_R \rangle = \bigO(L_R)$ for all regex $R$ with bounded star height. We then use this result to characterize the compute and memory requirements of Algorithm~\ref{alg:sampling}.

\paragraph{$\mathbf{R = S^*:}$} In order for $\ER_{S^*}$ to converge, we must have the spectral radius of $\ER_S$ be $\lambda_S := \rho(\ER_S) < 1$. Noting that this implies $\Tr(\QL \ER_S(\QR)) \leq \lambda_S \Tr(\QL \QR)$ for any boundary matrices $\QL, \QR$, we find that the probability of obtaining $m$ occurrences of $S$ is upper bounded as
\begin{equation*}
p(m) = \Tr(\QL (\ER_S)^{\circ m}(\QR)) / \mc{Z}_R(\QL, \QR) \leq \lambda_S^m \Tr(\QL \QR) / \mc{Z}_R(\QL, \QR) = \lambda_S^m\, p(0).
\end{equation*}
Given this exponentially decaying upper bound, the output of $\Sample(S^*)$ on average will consist of $\langle m \rangle = \bigO(\chi_S)$ calls to $\Sample(S)$, where $\chi_S := \lambda_S^{-1}$. Assuming we can obtain a boundary-independent upper bound on the expected length of $\Sample(S)$, then this proves that $\langle n_{S^*} \rangle = \bigO(\chi_S \langle n_S \rangle)$.

If $S$ itself contains expressions with deeply nested Kleene closures then this task becomes difficult, with the above bound translating to $\langle n_R \rangle = \bigO(\chi^{h_*(R)} L_R)$, for $h_*(R)$ the star height of $R$ and $\chi$ the maximum $\chi_{S_i}$ among all nested subexpressions $(S_i)^*$ within $R$. However, if we assume $R$ has bounded star height, then this reduces to $\langle n_R \rangle = \bigO(\chi^{h_*(R)} L_R) = \bigO(L_R)$, our desired bound.

Moving on to a consideration of the resource costs of $\Sample(S^*)$, we make the inductive assumption that a single call to $\Sample(S)$ has average-case runtime of $\bigO(L_S d D^3)$ and worst-case memory usage of $\bigO(L_S D^2)$. Algorithm~\ref{alg:sampling} 
in this case will $m$ samples from $S$, using the same right boundary condition $\QR^*$ each time. This leads to regex length, runtime, and memory usage of
\begin{align*}
    L_R &= L_S + 1 = \bigO(L_S), \qquad C_R = \bigO(\langle m \rangle C_S) = \bigO(\chi_S L_S d D^3) = \bigO(L_R d D^3), \\
    M_R &= M_S = \bigO(L_S D^2) = \bigO(L_R D^2),
\end{align*}
where the last equality of $C_R$ uses the bounded star height of $R$. We finally note that the above bound on $C_R$ also applies to the transfer operator $\ER_{S^*}$, whose action on $\QR$ can be approximated to arbitrary precision $\epsilon = \exp(\bigO(-m / \chi_S))$ using $m$ applications of $\ER_S$.

\paragraph{$\mathbf{R = \sigma}$\textbf{, for }$\mathbf{\sigma=c}$\textbf{ or }$\mathbf{\Sigma:}$} For the case of $R = c$, no resources are required for sampling. For $R = \Sigma$, the sampling procedure costs $C_\Sigma = \bigO(d D^3)$, which also gives an upper bound on the cost of applying the transfer operator $\ER_\sigma$. The regex and output string lengths are both 1 here and no caching is involved, so
\begin{equation*}
    L_\sigma = 1, \qquad C_\sigma = \bigO(d D^3) = \bigO(L_\sigma d D^3), \qquad M_\sigma = 0 = \bigO(L_\sigma D^2).
\end{equation*}
\paragraph{$\mathbf{R = R_1 R_2 \cdots R_K:}$} When evaluating $\Sample(R_1 \cdots R_K, \QL, \QR)$, we first compute and cache the $K$ right boundary matrices $\QR^{(1)}, \QR^{(2)}, \ldots, \QR^{(K)}$ in a right-to-left sweep, via the rules $\QR^{(K)} = \QR$ and $\QR^{(i-1)} = \ER_{R_i}(\QR^{(i)})$. This has a memory cost of $M_R = \bigO(K D^2)$. With these right boundary matrices in hand, we then use a left-to-right sweep to obtain strings $s_1, s_2, \ldots, s_K$ via repeated calls to $\Sample(R_i, \QL^{(i)}, \QR^{(i)})$, where the left boundary matrices are defined as $\QL^{(1)} = \QL$ and $\QL^{(i+1)} = \EL_{s_i}(\QL^{(i)})$. No caching of the $\QL^{(i)}$ is required, and each call to $\Sample(R_i, \QL^{(i)}, \QR^{(i)})$ generally involves some additional memory usage, which is freed immediately afterwards. 

Applying our inductive assumption about the runtime and memory usage of each of the transfer operators and $\Sample$ calls for the subexpressions $R_1, \ldots, R_K$, we get
\begin{align*}
    L_R &= \sum_{i=1}^K L_{R_i}, \qquad C_R = \bigO\!\left(\sum_{i=1}^K C_{R_i}\right) = \bigO\!\left(\sum_{i=1}^K L_{R_i} d D^3\right) = \bigO(L_R d D^3), \\
    M_R &= \bigO(K D^2) + \max_{i}(M_{R_i}) = \bigO(K D^2) + \bigO(\max_i(L_{R_i})D^2) = \bigO(L_R D^2).
\end{align*}
\paragraph{$\mathbf{R = R_1 | R_2 | \cdots | R_K:}$} To evaluate $\Sample(R_1 | \cdots | R_K, \QL, \QR)$, we must first sample a random $i$ from the distribution $p(i) = \mc{Z}_{R_i}(\QL, \QR) / \mc{Z}_R(\QL, \QR)$, then call $\Sample(R_i, \QL, \QR)$. This gives the following characterization of the overall runtime and memory usage
\begin{align*}
    L_R &= \bigO\left(\sum_{i=1}^K L_{R_i}\right), \qquad C_R = \bigO\!\left(\sum_{i=1}^K C_{R_i}\right) = \bigO\!\left(\sum_{i=1}^K L_{R_i} d D^3\right) = \bigO(L_R d D^3), \\
    M_R &= \max_i(M_{R_i}) = \bigO(\max_i(L_{R_i}) D^2) = \bigO(L_R D^2).
\end{align*}

\end{proof}

\section{Experimental Details}
\label{sec:appendix_d}

\begin{table}[h]
\caption{Definition of Tomita grammars 3-7 given in \citep{bengio1994}, which states a necessary and sufficient condition for a string to belong to each grammar. Tomita grammars 1 and 2 correspond to the respective family of strings $1^n$ and $(01)^*$, and are unused because of their small size and simple structure.}
\label{tab:tomita_def}
\begin{center}
\begin{small}
\begin{tabular}{c|lllll}
    \textbf{Tomita \#} & 3 & 4 & 5 & 6 & 7 \\
    \midrule
    \multirow{3}{*}{\textbf{Description}} & Doesn't contain         & Doesn't contain      & Contains an    & Number of 0's       & Has the form \\
                                          & $1^{2n+1}0^{2m+1}$ as a & $000$ as a substring & even number    & minus number of 1's & $0^*1^*0^*1^*$ \\
                                          & substring               &                      & of 0's and 1's & is a multiple of 3  & 
\end{tabular}
\end{small}
\end{center}
\end{table}

The u-MPS model we utilized was built from scratch in JAX~\citep{jax2018}, while the LSTM and Transformer modules from PyTorch~\citep{paszke2017} were used for baselines. The code for the experiments can be found at \url{https://github.com/jemisjoky/umps_code}. 
The LSTMs are single-layer models with 20 or 50 hidden units (20 or 50 in each direction for the bidirectional LSTM), and a linear decoder and softmax output layer used to obtain character probabilities. The bond dimension of the u-MPS was similar chosen to be 20 or 50, and for both types of models, five independent trials were used for each number of hidden states and the model with the lowest validation error was used to produce the sampling statistics reported in Section~\ref{sec:experiments}. 

For each grammar, the models were trained on either 1,000 or 10,000 randomly chosen strings, with 1,000 strings used as a held-out validation set. The sampling percentages for Table~\ref{tab:tomita} 
and the sampling tasks of Table~\ref{tab:motzkin} 
were obtained from sampling 1,000 random strings from the respective models, while the completion tasks of Table~\ref{tab:motzkin} 
used 1,000 random strings from a held-out reference set, where the models were used to infer each character in each string when all other characters were used as bidirectional context.

In all experiments, models were trained with gradient descent relative to a negative log likelihood (NLL) loss and Adam optimizer~\citep{kingma2015}. An initial learning rate of $10^{-2}$ was used, which was decreased by a factor of 10 each time the validation loss failed to improve for 5 consecutive epochs. In this manner, piecewise constant learning rates of $10^{-2}$, $10^{-3}$, and $10^{-4}$ were used, with the next drop in learning rate signalling the end of training.

The u-MPS, HMMs, and Transformers were trained identically for all experiments, with the unidirectional LSTM trained in the same way. The bidirectional LSTM was trained specifically for the string completion task, with the loss taken as a sum of the NLL of the correct character at each site of the training strings, given knowledge of all characters on the other sites. For each pair of sampling and completion tasks in Table~\ref{tab:motzkin}, 
the same trained u-MPS model was used to produce both statistics.

For the dataset of email addresses used in the real-text experiments, we extracted all sender and receiver addresses contained in the CLAIR fraudulent emails dataset~\citep{radev2008}, giving approximately 4,000 distinct addresses. Training was conducted in a similar manner to the synthetic data experiments, where the regular expression $R_{e} = \verb![\w-.]+@([\w-]+.)+[\w-][\w-]+!$ was used to judge the correctness of unconditionally sampled strings.

Our regex sampler and regex regularizer are straightforward recursive implementations of Algorithm~\ref{alg:sampling} and the correspondence in Table~\ref{tab:regex_cp}, respectively. Although these naive implementations would typically lead to a significant overhead compared to implementations specialized for $R_e$, the use of just-in-time (JIT) compilation within JAX gives a significant reduction in this overhead. Employing JIT in this setting is in fact historically well-motivated, considering that one of the earliest applications of JIT compilation was for identifying text matching regular expressions~\citep{thompson1968}. 

\bibliography{bib_file}

\end{document}